\definecolor{mydarkblue}{rgb}{0,0.08,0.45}
\newcommand{\ip}[2]{
{\langle {#1} , {#2} \rangle}
}
\newcommand{\vdot}[2]{
	{ {#1}^T {#2}}
}
\newcommand{\cT}{\mathcal{T}}
\newcommand*{\Prob}{\mathbb{P}}
\newcommand{\norm}[1]{\| #1 \|_2}
\newcommand{\inftynorm}[1]{\| #1 \|_{\infty}}
\newtheorem{lem}{Lemma}
\newtheorem*{Conj*}{Conjecture}
\newcommand{\cZ}{{\cal Z}}
\newcommand{\td}[1]{\tilde{#1}}
\newcommand{\eat}[1]{}
\newcommand{\commented}{no}
\newcommand{\pnote}[1]{\footnote{{\bf [[Parik: {#1}\bf ]] }}}
\newcommand{\pnote}[1]{}
\newcommand{\ignore}[1]{}
\def\colorful{3}
\newcommand{\E}{\mathop{\mathbb{E}\/}}
\newcommand{\R}{\mathbb R}
\newcommand{\eps}{\epsilon}
\newcommand{\poly}{\mathrm{poly}}
\newcounter{this-list}
\newcommand{\significant}{\Prob_{x|s}}
\newcommand{\posterior}{\Prob_{x|v}}
\newcommand{\initial}{\Prob_{x|v_0}}
\newcommand{\edgeposterior}{\Prob_{x|e}}
\newcommand{\Sig}{\text{Sig}}
\newcommand{\Bad}{\text{Bad}}
\newcommand{\cS}{{\mathcal{S}_d}}
\newcommand{\cU}{\mathcal{U}_d}
\newcommand{\edge}{e}
\newtheorem*{conjecture}{Conjecture}
\begin{document}


\vspace{.2in}


\newcommand{\theTitle}{Memory-Sample Tradeoffs for Linear Regression with Small Error\thanks{The contributions of Vatsal and Gregory were supported by NSF awards AF-1813049 and CCF-1704417, and ONR Young Investigator Award N00014-18-1-2295. Aaron's contributions were supported by NSF CAREER Award CCF-1844855.}}
\author{
 \fontsize{11}{13}\selectfont {\bf Vatsal Sharan} \\
 \fontsize{11}{13}\selectfont Stanford University \\
 \fontsize{11}{13}\selectfont {\tt vsharan@stanford.edu}
\and
 \fontsize{11}{13}\selectfont {\bf Aaron Sidford} \\
 \fontsize{11}{13}\selectfont Stanford University \\
 \fontsize{11}{13}\selectfont {\tt sidford@stanford.edu}
\and
\fontsize{11}{13}\selectfont {\bf Gregory Valiant} \\
\fontsize{11}{13}\selectfont Stanford University \\
\fontsize{11}{13}\selectfont {\tt valiant@stanford.edu}
}

\title{\theTitle}
\date{}

\clearpage
\maketitle\begin{abstract}
	We consider the problem of performing linear regression over a stream of $d$-dimensional examples, and show that any algorithm that uses a subquadratic amount of memory exhibits a slower rate of convergence than can be achieved without memory constraints.  Specifically, consider a sequence of labeled examples $(a_1,b_1), (a_2,b_2)\ldots,$ with $a_i$ drawn independently from a $d$-dimensional isotropic Gaussian, and where $b_i = \langle a_i, x\rangle + \eta_i,$ for a fixed $x \in \mathbb{R}^d$ with $\|x\|_2 = 1$ and with independent noise $\eta_i$ drawn uniformly from the interval $[-2^{-d/5},2^{-d/5}]$.  We show that any algorithm with at most $d^2/4$ bits of memory requires at least $\Omega(d \log \log \frac{1}{\epsilon})$ samples to approximate $x$ to $\ell_2$ error $\epsilon$ with probability of success at least $2/3$, for $\epsilon$ sufficiently small as a function of $d$.  In contrast, for such $\epsilon$,  $x$ can be recovered to error $\epsilon$ with probability $1-o(1)$ with memory $O\left(d^2 \log(1/\epsilon)\right)$ using $d$ examples.  This represents the first nontrivial lower bounds for regression with super-linear memory, and may open the door for strong memory/sample tradeoffs for continuous optimization.
\end{abstract}
\thispagestyle{empty}
\newpage
\setcounter{page}{1}
\section{Introduction}
What are the implications of memory constraints on the ability to efficiently learn or optimize?  As has been revealed in a recent series of striking results~\citep{raz2016fast,raz2017time,beame2018time,kol2017time,moshkovitz2017mixing,moshkovitz2017mixings,garg2018extractor}, for a broad class of natural learning problems over the Boolean hypercube and other finite fields, there is a sharp threshold for the amount of memory required to learn with a polynomial amount of data.

This line of work was sparked by Raz' breakthrough result~\citep{raz2016fast}, which considered the problem of learning a parity: given access to a stream of labeled examples, $(a_1,b_1),\ldots,$ where each $a_i \in \{0,1\}^d$ is drawn uniformly from the $d$-dimensional hypercube and $b_i = \langle a_i,x\rangle \mod 2$ for some fixed vector $x$, Raz showed that any algorithm with $o(n^2)$ memory would require an exponential number of examples to learn $x$ (with any significant probability of success).  Of course, given a quadratic amount of memory, $x$ can be efficiently computed by taking the first $O(n)$ examples and solving the corresponding linear system over $\mathbb{F}_2$.  Subsequent work extended this result to a broad class of discrete learning problems, including~\cite{raz2017time,moshkovitz2017mixing,moshkovitz2017mixings} which generalized the results to a class of Boolean learning problems that satisfy a certain combinatorial condition,~\cite{kol2017time} which extended the techniques to the problems of learning sparse parities (parities involving $o(d)$ coordinates) which implies hardness of several other natural Boolean learning problems including learning small juntas, small decision trees, and small DNF formulae, and the works~\cite{beame2018time} and~\cite{garg2018extractor} whcih strengthened the approach of~\cite{raz2017time} to yield tight tradeoffs for a larger class of learning problems over finite fields, including homogeneous $m$-variate polynomials over $\mathbb{F}_2$.

For continuous, real-valued optimization and learning problems, much less is known about memory/sample tradeoffs.  This is in spite of fact that the problem of learning a linear regression---the real-valued analog of learning parities---lies at the core of machine learning and is a prototypical convex optimization problem.  Indeed, one of the original motivations for the conjecture that learning a parity required either a quadratic memory or exponential time, originally stated in~\cite{steinhardt2016memory}, was the question of the memory/sample tradeoffs for linear regression. 

This question of the memory/sample tradeoffs for linear regression is also extremely important from a practical perspective. Gradient-based `first-order' methods are \emph{the} workhorse of modern machine learning, in contrast to `second-order' methods.   This is explained by the efficiency benefit conferred by the linear memory footprint of first-order methods as opposed to the quadratic memory requirements of second-order methods.  For large-scale learning problems, this reduction in memory usage of first-order methods more than compensates for the increase in the number of iterations or datapoints processed.  If methods with comparable memory usage to first-order methods (or at least significantly subquadratic) were capable of achieving similar convergence rates to second-order methods, that could have far-reaching practical implications.  

The question of the memory/sample tradeoffs for linear regression is also a natural and largely unexplored frontier of continuous optimization research. There is classic line of research which has proven information theoretic lower bounds on continuous optimization \citep{nemirovskii1983problem,Nesterov2004notes,Bubeck15} in a restricted oracle model where the input real-valued function can only be queried via black box queries to an oracle that returns local information about the function, e.g. function values, gradients, etc. Given only mild regularity assumptions on a function, e.g. Lipschitz continuity, smoothness, convexity, etc., proving tight bounds on the number of queries to an oracle needed to approximately minimize a function is well-studied and has been a driving force behind the development of modern optimization theory. While there is some work studying the effect of parallelism on these lower bounds \citep{Nemirovski94, Yaron2018, DiakonikolasGuzman18}, we are unaware of previous work proving gaps between the query complexity of optimization problems under differing memory constraints. 
\medskip

In this work, we provide the first nontrivial memory/sample tradeoffs for linear regression which apply in the regime where the available memory is significantly larger than what would be required to store each datapoint to high precision.

\medskip
\noindent {\textbf{Theorem~\ref{thm:main}.}} \emph{Consider a sequence of labeled examples $(a_1,b_1), (a_2,b_2)\ldots,$ with $a_i$ drawn independently from a $d$-dimensional isotropic Gaussian with an identity covariance matrix, and where $b_i = \langle a_i, x\rangle + \eta_i,$ for a fixed $x \in \mathbb{R}^d$ with $\|x\|_2 = 1$ and with independent noise $\eta_i$ drawn uniformly from the interval $[-2^{-d/5},2^{-d/5}]$. Let $\eps= 1/d^r$ for any $r\le O(d/\log d)$. Then any algorithm with at most $d^2/4$ bits of memory requires at least $\Omega(d \log r)$ samples to approximate $x$ to $\ell_2$ error $\eps$ with probability of success at least $2/3$. In particular, for $\eps\le 1/d^{\Omega(\log d)}$, this implies that any algorithm with $d^2/4$ bits of memory requires at least $\Omega(d \log \log \frac{1}{\eps})$ samples to approximate $x$ to error $\eps$ with probability of success at least $2/3$. }
\medskip

For comparison, note that for  $\eps\in[2^{-\Theta(d)}, 1/d]$, the trivial algorithm based on storing $d$ examples in memory and then solving the linear system recovers $x$ to error $\eps$ with probability $1-o(1)$   using $O\left(d^2 \log(1/\eps)\right)$ bits of memory and $d$ examples.\footnote{This follows from the fact that the condition number of the system with $d$ examples is at most $1/\text{poly}(d)$ with high probability, and hence we can solve the system to accuracy $\eps$ by doing all computations with $O(\log(d/\eps))=O(\log(1/\eps))$ bits of precision, for $\eps\le 1/d$.}


\subsection{Further Directions}

Our result establishes the existence of a sharp gap in sample complexities for regression with bounded memory. Nevertheless, this still leaves a significant margin between our lower bounds on the convergence rate of bounded memory algorithms, and those achieved by the best known first-order methods which use memory $O(d \log(1/\epsilon))$.
For example, randomized Kaczmarz \citep{StrohmerV06} can be easily shown to compute a point $\td{x}$ such that $\norm{x - \td{x}} \leq \epsilon$ with constant probability using $O(d \log(1/\epsilon))$ samples and memory $O(d \log(1/\epsilon))$. This is the best known sample complexity for achieving error $\eps$ given this amount of memory, though our lower bound leaves open the question of whether it is optimal.

Beyond tightening our result, it is also worth considering the analogous regression question in the setting where the datapoints are drawn from an ill-conditioned distribution.  Both in practice, and theory, first-order methods suffer a convergence rate that degrades with the condition number. Over the past decade there has been extensive research on providing iterative methods which use sub-quadratic space and seek better dependencies on the eigenvalues of the covariance of the distribution of $a$. Though there have been several improvements to randomized Kaczmarz and variants of SGD in recent years \citep{HuKP09, Lan12, GhadimiL13, LeeS13, FrostigGKS15, lw2016, NeedellSW16, Dieuleveut2017, jkkns2018} the number of samples required by these methods all depend \emph{polynomially} on some measure of eigenvalue range or conditioning of the underlying covariance matrix. This is in sharp contrast to  second-order methods which can simply store $\Theta(d)$ samples and invert an associated matrix to compute an $\epsilon$ accurate solution with $\Omega(d^2 \log(d \kappa / \epsilon))$ memory where $\kappa$ is a condition number measure of the matrix. 

\begin{conjecture}
For any $\kappa$ bounded by a polynomial of $d$, there exists a distribution $\mathcal{D}_{\kappa}$ over $d$-dimensional Gaussian distributions whose covariance has condition number $\kappa$, such that for $G \leftarrow \mathcal{D}_{\kappa}$, given a sequence of examples $(a_i,b_i)$ with $a_i$ drawn from $G$ and $b_i = \langle a_i, x\rangle$, any algorithm that recovers the unit vector $x$ to small constant $\ell_2$ error with constant probability either requires  $\Omega(d^2)$ bits of memory, or $d \cdot \poly(\kappa)$ examples.
\end{conjecture}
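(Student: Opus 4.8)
\medskip
The plan is to extend the branching-program lower-bound framework behind Theorem~\ref{thm:main} to a hard distribution that hides \emph{two} objects simultaneously: the ill-conditioned directions of the covariance and the regressor $x$. Concretely, let $\mathcal D_\kappa$ draw a uniformly random orthonormal frame $R$ and set $\Sigma_R = R\,\mathrm{diag}(1,\dots,1,\kappa^{-1},\dots,\kappa^{-1})R^\top$ with exactly $d/2$ eigenvalues equal to $\kappa^{-1}$, so that every realization has condition number exactly $\kappa$; the secret $x$ is an independent near-uniform unit vector, discretized to $O(\log(d\kappa))$ bits per coordinate so the information-theoretic bookkeeping is clean. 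Crucially $R$ is not revealed. In the eigenbasis of $\Sigma_R$ a sample reads $a=(g^{(1)},\kappa^{-1/2}g^{(2)})$, $b=\langle g^{(1)},x^{(1)}\rangle+\kappa^{-1/2}\langle g^{(2)},x^{(2)}\rangle$ with $g^{(1)},g^{(2)}$ independent standard Gaussians in $\R^{d/2}$, but which rotated half of the coordinates is ``small'' is itself unknown to the algorithm. For calibration the target bound is tight: randomized Kaczmarz, which never needs $\Sigma_R$, recovers $x$ to constant error with $O(d\kappa)$ samples and $O(d\log(d\kappa))$ memory, so what must be shown is precisely that $o(d^2)$ bits of memory cannot remove the $\poly(\kappa)$ factor.

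First I would reduce ``recover $x$'' to ``recover $\Pi_S x$'', where $S$ is the span of the $\kappa^{-1}$-eigenvectors of $\Sigma_R$: since $x$ and $R$ are independent, $\|\Pi_S x\|\ge 1/3$ with probability $1-o(1)$, so an $\ell_2$-error-$\tfrac1{10}$ reconstruction of $x$ induces a constant-error reconstruction of the unit vector $\Pi_S x/\|\Pi_S x\|$ inside the $(d/2)$-dimensional subspace $S$. I would then run a Garg--Raz--Tal-style progress argument over the branching program: for each vertex $v$ (memory state) track the joint posterior $\Prob_{(S,\Pi_S x)\mid v}$, or a scalar potential such as $\E[\,\|\E[\Pi_S x\mid v]\|^2\,]$ coupled with a ``the program does not yet know $S$'' invariant.

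The two invariants should feed each other. As long as the state stores at most $d^2/C$ bits (for a suitable constant $C$) it cannot have pinned down the $d/2$-dimensional subspace $S$ to useful accuracy --- identifying $S$ is essentially a $\Theta(d^2)$-bit statistic of the stream, morally the empirical second moment --- and as long as $S$ is not pinned down, a fresh sample $(a,b)$ reveals $\Pi_S x$ only through the term $\kappa^{-1/2}\langle g^{(2)},x^{(2)}\rangle$, whose magnitude $\Theta(\kappa^{-1/2})$ is a $\kappa^{-1/2}$ fraction of the ``nuisance'' part $\langle g^{(1)},x^{(1)}\rangle$ and sits in a direction the state cannot locate; this should bound the per-step increase of the potential by $\poly(1/\kappa)$. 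Summing over the stream and comparing with the $\Omega(d)$ units of potential needed to reach constant error on $\Pi_S x$ gives the claimed $d\cdot\poly(\kappa)$ lower bound against any $o(d^2)$-memory algorithm.

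The main obstacle is making the ``$\kappa^{-1/2}$ fraction'' intuition rigorous against an adaptive algorithm, for two linked reasons. (i) The algorithm is free to spend effort shrinking its residual uncertainty in $x^{(1)}$ and in $S$, so the ``progress is throttled'' claim must be robust to an adaptive allocation of the memory budget between learning $\Sigma_R$ and learning $x$: one has to show any such effort is either ineffective with $o(d^2)$ memory or already costs $d\cdot\poly(\kappa)$ samples, i.e.\ the per-sample progress bound has to degrade gracefully with the amount of covariance information the state has accumulated. (ii) Quantitatively, the existing memory/sample toolkit --- Raz, Garg--Raz--Tal, and the argument used for Theorem~\ref{thm:main} --- is tailored to a \emph{single} hidden hypercube-like secret carrying $\Theta(1)$ bits per sample and naturally yields only $\polylog$-type slowdowns; extracting a genuine $\poly(\kappa)$ factor seems to need a new ingredient, most plausibly a stand-alone statement that $m$ bits of memory cannot ``precondition'' an unknown rotated covariance down to condition number $o(\kappa)$ unless $m=\Omega(d^2)$, which would then be threaded into the potential argument above. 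I would expect the bulk of the work to be (ii): isolating and proving the right ``no cheap preconditioning'' lemma, and combining it with the branching-program analysis.
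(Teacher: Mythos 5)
The first thing to note is that this statement is not proved in the paper: it appears in the ``Further Directions'' section explicitly as a conjecture, and the authors say only that Theorem~\ref{thm:main} ``may be a first step'' toward it. So there is no proof to compare yours against, and the real question is whether your proposal closes the gap. It does not, and you essentially say so yourself: the two load-bearing claims --- (a) that a state with $o(d^2)$ bits cannot usefully localize the $d/2$-dimensional subspace $S$, and (b) that as long as $S$ is unlocalized the per-sample progress toward $\Pi_S x$ is throttled by $\poly(1/\kappa)$ --- are exactly the content of the conjecture, and you leave both as things ``to be shown.'' The assertion that identifying $S$ ``is essentially a $\Theta(d^2)$-bit statistic'' is intuition, not an argument; nothing in the Raz / Garg--Raz--Tal / present-paper toolkit yields a statement of the form ``$m$ bits of memory cannot precondition an unknown rotated covariance,'' and your own obstacle (ii) concedes that a genuinely new ingredient is needed there. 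A proposal whose central lemma is itself an open problem is a restatement of the conjecture, not a proof.

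Two more concrete technical worries with the plan as written. First, each sample $a_i$ carries information about $S$ directly (its squared projections onto $S$ are smaller by a factor of $\kappa$), so the two invariants you want to ``feed each other'' are coupled within every single sample rather than across phases; the paper's potential $\cZ_i$ is built around one posterior $\posterior$ and one concentration lemma for projections onto an \emph{isotropic} Gaussian direction (Lemma~\ref{lem:upper_bnd}), and it is unclear what replaces that lemma when $a$ is drawn from an unknown anisotropic $G$ and the object being tracked is a joint posterior over $(S,\Pi_S x)$. Second, the conjecture as stated has noiseless labels $b_i=\langle a_i,x\rangle$, whereas the entire machinery of Theorem~\ref{thm:main} --- the definition of $G_{v,a}(I_\delta(b))$, the bound $c_e\ge\delta/d^3$ in Lemma~\ref{lem:edge_posterior}, the stopping rule for bad edges --- hinges on the additive noise of width $\delta=2^{-d/5}$; discretizing $x$ alone does not obviously restore that structure. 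None of this means your plan is misguided --- the reduction to recovering $\Pi_S x/\|\Pi_S x\|$ inside $S$ is a sensible first move, and the ``no cheap preconditioning'' lemma you isolate is plausibly the right target --- but as a proof the argument is missing its main steps.
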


The work of this paper may be a first step towards proving the above conjecture. We hope this paper will inspire efforts to establish strong memory/query complexity trade-offs for continuous optimization more broadly. 

In a different direction, it may be worth considering the extent to which results of the form of Theorem~\ref{thm:main} apply beyond the stochastic streaming setting.  Rather than considering a stream of independent examples, one could consider the analogous questions in a cell-probe setting: suppose there is a set of $O(d)$ examples stored in read-only memory, and one is charged according to the number of times each example is `downloaded' into working memory.  What are the tradeoffs between the amount of working memory, error of the recovered linear regression, and number of `downloads'?   This setting closely corresponds to the data pipeline employed in many large scale learning settings, and any strong results in this setting would be extremely interesting.

It is worth noting that, even in the setting of learning parities, the stochasticity of the examples is essential to the exponential sample complexity of memory-constrained algorithms.  Analogous results are not true in the above cell-probe model.  For example, given $O(n)$ examples stored in read-only memory, there exists a successful learning algorithm for the parity problem with $O(n)$ working memory that uses $poly(n)$ runtime and cell-probes \citep{weihao} (though, to the best of our knowledge, it is not known if there is a successful learning algorithm for the real-valued regression problem which uses $O(n)$ working memory and $poly(n)$ cell-probes).  Still, establishing any nontrivial gap between memory-constrained and unconstrained learning (for either the real-valued regression or parity problems) in the cell-probe setting would be exciting, though may be quite difficult.

\subsection{Related Work}

A number of recent works have examined learning problems such as sparse linear regression \citep{steinhardt2015minimax} and detecting correlations \citep{shamir2014fundamental,dagan2018detecting} under information constraints such as limited memory or communication constraints. These results usually develop information-theoretic inequalities \citep{braverman2016communication,steinhardt2016memory,dagan2018detecting} to show that unless a set of parties exchange a minimum amount of information, they cannot solve the learning problem---with the memory bound following as a consequence of the communication lower bound. At a high level, the idea is to show that if the learning problem requires distinguishing between a set of $k$ distributions, and if the distributions are sufficiently uncorrelated, then at least $\Omega(k)$ bits of communication are needed to solve the learning problem. While initial results only obtained lower bounds for settings where the memory budget is less than the size of each data point, the recent work \citet{dagan2018detecting} circumvented this barrier and showed strong lower bounds for detecting correlations for natural distributions under information constraints.

Many of these information theoretic tools seem to break down for learning problems such as parity learning where  communication lower bounds do not directly give meaningful memory bounds. Hence these settings require explicitly taking into account the memory constraint of the algorithm; the recent line of work discussed in the introduction, starting with \citet{raz2016fast} achieves sharp lower bounds for memory-bounded learning by directly analyzing the structure of width-bounded branching programs for these problems \citep{raz2017time,kol2017time,moshkovitz2017mixing,moshkovitz2017mixings,beame2018time,garg2018extractor}.  Our work directly builds on the analysis framework developed in~\cite{raz2017time}, and extended in~\cite{beame2018time} and~\cite{garg2018extractor}, with the crucial difference that the geometry of the continuous space corresponding to linear regression lacks many of the combinatorial properties that are leveraged in the analysis of these prior works.

There is also a large literature on memory lower bounds for streaming algorithms (for e.g. ~\cite{alon1999space,bar2004information,clarkson2009numerical}, although these are mostly for non-learning problems and assume that the input stream is constructed in an adversarial fashion.

On the optimization side, there is a long history of proving information theoretic lower bounds on optimization methods. These results typically show that given a type of restricted local oracle to access the input, i.e. an oracle which only returns information about values, gradients, higher derivatives, separation oracles, etc., lower bounds can be formally proven on the number of queries needed to approximately minimize the function. Such results date back to the early work of \cite{nemirovskii1983problem} on the oracle complexity of optimization and there are too many results to do a complete review (see \citet{Nesterov2004notes,Bubeck15}
for more recent surveys). Key results in this broad area of research include, tight oracle bounds known for computing approximate minimizer of smooth convex functions given a gradient oracle \citep{nemirovskii1983problem, Nesterov2004notes, Bubeck15} even when randomization is allowed \citep{WoodworthS16}, tight oracle bounds known for computing approximate minimizer of Lipschitz convex function given by a subgradient oracle \citep{BraunGP17, nemirovskii1983problem, Nesterov2004notes}, and even tight oracle bounds for computing critical points, that is points of small gradient, for smooth non-convex functions given by a gradient oracle \citep{nesterov2012make,cartis2012much,cartis2017worst,carmon2017low1,carmon2017lower2}. There has also been extensive research on the oracle complexity of stochastic optimization \citep{nemirovskii1983problem,devolder2013first,devolder2014first,Shamir13,DuchiJWW15} and work on the tradeoff between oracle complexity and parallelism for nonsmooth optimization with a subgradient oracle \citep{Nemirovski94, Yaron2018, DiakonikolasGuzman18}. However, to the best of the authors knowledge the problem of memory / query complexity tradeoffs for real-valued continuous optimization has been largely unexplored.

\section{Setup and Proof Overview}

In this section, we provide an overview of our proof approach.  We begin by describing the notation and formalism we will use in analyzing the branching program representing a memory-bounded learning algorithm. 

\subsection*{Branching Programs for Learning}

We model the learner by a branching program $B$. A branching program is a general non-uniform model for space bounded computation. The branching program has $m$ layers, corresponding to $m$ time steps, with each layer having at most $w$ vertices, where $w$ denotes the \emph{width} of $B$. Each vertex of $B$ corresponds to a memory state, and a branching program with width $w$ corresponds to an algorithm with memory usage $\log_2 w$. A vertex with no outgoing edges is called a \emph{leaf}, and all vertices in the last layer are leaves (though there may be additional leaves). Each non-leaf vertex $v$ has an associated transition function $f_v: \mathbb{R}^d \times \mathbb{R} \rightarrow [w]$, representing the mapping from an example $(a,b)$ to a vertex in the next layer. Without loss of generality, we may assume that these transition functions are deterministic, as randomization cannot improve the probability of success.\footnote{This can be easily seen by noting that any branching program with randomized transitions can be  converted to a deterministic one by iteratively derandomizing each vertex by replacing its randomized transition function with the deterministic one that select the transition that maximizes the probability of success (breaking ties arbitrarily), where the probability is taken over the randomization in the subsequent examples and whatever randomization remains in the transition functions corresponding to other vertices.}  To be consistent with the literature on branching programs, we will refer to this transition function as a series of `edges' indexed by the (infinite number of) potential examples $(a,b).$ Finally, each leaf vertex, $v$, of the branching program is labeled by a \emph{label}, $\tilde{x}(v)$, representing the output value that the corresponding algorithm would produce on the sequence of examples that led to vertex $v$. 

The success probability of the branching program $B$ for a specified accuracy parameter $\eps$ is the probability that ${\norm{\td{x}-x}\le \eps}$, where $\td{x}$ is the vector returned by $B$, and the probability is with respect to the randomness in the sequence of examples and choice of the true $x$.

\subsection*{Setup}

We consider branching programs whose goal is to learn some true $x\in \mathbb{R}^d$ with $\norm{x}=1$ to $\ell_2$ error $\eps$, in the setting where $x$ is drawn uniformly at random from the $d$ dimensional unit sphere. At every time step, a $d$-dimensional vector $a$ is sampled from $N(0,I_d)$, and the branching program is given $a$ and the (noisy) inner product $b=a^Tx+\eta$, where the noise $\eta$ is sampled from $U[-\delta,\delta]$ for $\delta=2^{-d/5}$.  The addition of this noise facilitates the analysis, and we could have equivalently assumed that the true inner product $a^Tx$ is discretized according to some exponentially small discretization error $\delta$. Note that as long as the goal is to estimate $x$ up to accuracy $\eps \ge 2^{-\gamma d}$ for a small constant $\gamma$, the small uniform noise or discretization does not create any information theoretic obstacles. 

\subsection*{Proof Overview}\label{sec:proof_overview}

Our proof follows and builds on the recent analytic framework for showing time-space lower bounds developed in \citet{raz2017time}, and further extended in \citet{beame2018time} and \citet{garg2018extractor}. The analysis in our case is complicated by the fact that the gap in the sample complexity of first order and second order methods for regression on well-conditioned matrices is not very large, and depends on the desired error $\eps$. 
To capture this dependence of the sample complexity on $\eps$, we divide the branching program into multiple \emph{stages}, where a stage is a group of consecutive layers of the branching program. Each stage will intuitively correspond to the branching program reducing the $\ell_2$ error of the estimate of $x$ by a factor of two. We will argue that each of these stages cannot be too short if the algorithm has small memory. We now sketch the proof, describing the high-level framework of \citet{raz2017time} and how we adapt it to our setup.

As in \citet{raz2017time}, we define a truncated computation path $\cT$ which follows the computation path of the branching program $B$, but may stop before reaching a leaf vertex. The conditions under which $\cT$ stops before reaching a leaf vertex will depend on the stage of the branching program which $\cT$ is in. For any vertex $v$ in the branching program, let $\posterior$ be the posterior distribution of $x$ conditioned on being at $v$. We will quantify the progress made by a vertex $v$ towards learning $x$ by the $\ell_2$ norm $\norm{\posterior}$ of the posterior distribution $\posterior$ of $v$, note that a large norm indicates a concentrated posterior with an accurate estimate of $x$. The truncated path $\cT$ stops at any \emph{significant} vertex $s$ where $\norm{\significant}$ is larger than some threshold, where the threshold is chosen as a function of the  stage of the branching program being analyzed. Intuitively, if $\norm{\significant}$ is larger than a given threshold, then $\cT$ has more information about $x$ then we expect it to have at that stage. Most of the effort in \citet{raz2017time} and in our work goes into ensuring that the probability of any significant vertex is small enough  that the probability of $\cT$ stopping due to reaching a significant vertex is small.

We now sketch the argument for showing that the probability of reaching a significant vertex, $s,$ is small, for any stage $B_t$ of the branching program $B$. Let $L_i$ denote the set of all vertices in the $i$th layer of the $t$-th stage $B_t$. The following potential function tracks the progress which the $i$-th layer of $B_t$ has made towards a fixed significant vertex $s$ in $B_t$,
\begin{align*}
 \cZ_i = \sum_{v \in L_i}^{}\Pr(v)\cdot\ip{\posterior}{\significant}^{d/2} .
 \end{align*}
We claim that if  $\cZ_i$ is small and the significant vertex $s$ lies in the $i$-th layer of $B_t$, then the probability of $s$ must also be small. This follows because we define significant vertices as those for which $\norm{\significant}$ is large, and if $s$ is in the $i$th layer then $\cZ_i \ge \Pr(s)\cdot\norm{\significant}^{d}$. Hence our goal will be to show that $\cZ_i$ is small. Note that raising to the power of $d/2$ in our expression for $\cZ_i$ allows us to show that the probability of significant vertices is small enough that we can do a union bound over all vertices in the branching program, and $d/2$ is the largest power to which we can raise while keeping the contribution of the low probability events small. 

We prove that $\cZ_i$ is small via an induction argument. We first show that $\cZ_0$ must be small, as the previous stage $B_{t-1}$ of the branching program could not have made too much progress. We next show that $\cZ_{i+1}$ cannot be much larger than $\cZ_i$. To show this, we introduce another potential which tracks how much progress any edge $e$ of the branching program has made towards $s$. Let $\edgeposterior$ be the posterior distribution of $x$ conditioned on the event of traversing edge $e$ in the branching program. Let $\Gamma_i$ denote the set of all edges from the $(i-1)$-th layer to the $i$-th layer of the $t$-th stage $B_t$ of the branching program, and let $p(e)$ be the p.d.f. of the distribution over edges evaluted at edge $e$. For any $i$, we define the potential,
\begin{align*}
\cZ_i' = \int\displaylimits_{\edge \in \Gamma_{i}}p(\edge)\cdot\ip{\edgeposterior}{\significant}^{ d/2}\; de.
\end{align*}
A straightforward convexity argument shows that $\cZ_{i+1}\le \cZ'_i$. Hence the main challenge is showing that $\cZ_i'$ cannot be much larger than $\cZ_i$. This is where our analysis differs significantly from \citet{raz2017time} (this is also where \citet{beame2018time} and \citet{garg2018extractor} differ the most from \citet{raz2017time}). In these previous works which concern learning over finite fields, the learning problem is viewed as a certain matrix, and properties of this matrix are used to show that $\cZ_i'$ cannot be much larger than $\cZ_i$. It is worth noting that in these settings, it is possible to argue that the example in the next time step looks almost random to the branching program if it does not have significant knowledge about the answer, and then use this to show that the branching program cannot make too much progress when it gets an example. In our case though, first-order methods which require only linear memory \emph{can} learn $x$ up to non-negligible error with only linear sample complexity, hence the examples do not have as much randomness. Also, as we work over continuous spaces we lack the combinatorial properties that enables the analysis in the previous works to go through, and need to develop different tools.  

We now sketch our argument for showing that $\cZ_i'$ cannot be much larger than $\cZ_i$. For intuition, we first describe the argument as it would pertain to the branching program corresponding to the linear memory, first-order method for regression. At a high level, by the end of the $t$-th stage of this branching program, the algorithm has learned $x$ up to error roughly $\eps_t=1/2^t$, and the posterior $\posterior$ of a vertex $v$ in this stage roughly corresponds to a spherical Gaussian with standard deviation $\eps_t$ in every direction.  A target significant vertex, $s$, in the $t$-th stage will have posterior $\significant$ roughly corresponding to another spherical Gaussian, but with standard deviation $\eps_t/2$ in every direction.  This significant vertex represents a memory state that has learned significantly more than is expected of a vertex in this stage, and we will show that the probability of reaching such a vertex is small. As every example $(a,b)$ has some small uniform noise  $\eta\sim U[-\delta,\delta]$ added to $b$, if the branching program is initially at vertex $v$ and then gets the example $(a,b)$, the posterior $\posterior$ is updated by restricting it to the thin slice of the spherical Gaussian where $a^T x = {[b-\delta,b+\delta]}$. We need to argue that this slicing does not significantly increase the inner product with the posterior $\significant$ corresponding to the smaller, target Gaussian.  This holds, provided the target Gaussian does not have significantly higher probability mass in the slice to which we are restricting.  This is easy to analyze in this special setting where the posteriors are spherical Gaussians, by simply analyzing the projections of the two Gaussians along a random direction $a$.  In our actual proof, to bound the rate of progress via this argument, we cannot assume that the posteriors have such a nice form.  Nevertheless, we show a concentration result that guarantees that, for any distribution with sufficiently small $\ell_2$ norm, the projections can not behave too much worse than projections of spherical Gaussians with the corresponding $\ell_2$ norms.


To sketch the argument more formally, we need to define some notation. We define $\td{f}$ as the point-wise product of the distributions $\posterior$ and $\significant$, with suitable normalization. Hence for any $x'$ on the $d$ dimensional unit sphere,
$$
\td{f}(x') = \frac{\posterior(x')\cdot \significant(x')}{\int\displaylimits_{z}\posterior(z)\cdot \significant(z)\; dz}\;.
$$
Let $I_{\delta}(b)$ be the interval $[b-\delta,b+\delta]$. For any distribution $f$ and fixed $a$, define   $G_{{f},a}(I_{\delta}(b)) = {\E_{x'\sim f}\Big[ \mathbf{1}(a^Tx'\in I_{\delta}(b))\Big]}$. Note that for a vertex $v$ with posterior distribution $\posterior$, $G_{\posterior,a}(I_{\delta}(b))$ is the probability mass on vectors $x'$ which are consistent with the example $(a,b)$, up to the noise level $\delta$. With some technical work, we can approximately relate $\ip{\posterior}{\significant}$ and $\ip{\edgeposterior}{\significant}$ for an edge $e$ labelled by $(a,b)$ as follows,
$$
\ip{\edgeposterior}{\significant} \approx \ip{\posterior}{\significant} \cdot \frac{G_{\td{f},a}(I_{\delta}(b))}{G_{\posterior,a}(I_{\delta}(b))}.
$$
Intuitively, the above relation  says that the progress that the truncated path $\cT$ makes towards some target distribution $\td{f}$ after receiving example $(a,b)$ depends on the ratio of the probability mass of $\td{f}$ which is consistent with $(a,b)$, and that of $\posterior$ which is consistent with $(a,b)$. Hence in order to bound $\E_e[\ip{\edgeposterior}{\significant}^{d/2}]$ in terms of $\ip{\posterior}{\significant}^{d/2}$, our goal will be to upper bound $$ \E_{a,b}\Big[\frac{G_{\td{f},a}(I_{\delta}(b))}{G_{\posterior,a}(I_{\delta}(b))}\Big]^{d/2}.$$
Note that as $b=a^Tx+\eta$ where $x\sim \posterior$, we can show that examples $(a,b)$ where $G_{\posterior,a}(I_{\delta}(b))$ is too small have small probability. Hence we can lower bound the denominator by making the truncated path $\cT$ stop if $G_{\posterior,a}(I_{\delta}(b))$ is too small, while still ensuring that the probability of $\cT$ stopping due to this reason is small.

It is more complicated to upper bound $\E_{a,b}[G_{\td{f},a}(I_{\delta}(b))^{d/2}]$. Note that $G_{\td{f},a}(I_{\delta}(b))$ is the probability mass of the distribution $\td{f}$ which lies in the interval $I_{\delta}(b)$ when $\td{f}$ is projected onto a random direction $a$. The linear projection of a high-dimensional distribution onto a random direction is a well-studied topic, and it is known that under mild conditions on $\td{f}$ such as bounded second moments, its projection onto a random direction is approximately Gaussian \citep{bobkov2003concentration,anttila2003central,von1997sudakov} or a mixture of Gaussians \citep{dasgupta2006concentration} with high probability. However, these results typically only give an additive $O(1/\sqrt{d})$ error guarantee for the difference between the probability mass of $\td{f}$ on any interval $I$ and that of an appropriate Gaussian on that interval (and this is tight given only second moment constraints). Note that in our case the intervals $I_{\delta}(b)$ have exponentially small width $\delta$, and we care about the multiplicative approximation error, hence these $O(1/\sqrt{d})$ additive error guarantees are not strong enough. We show that we can obtain stronger guarantees in our case by ensuring that $\norm{\td{f}}$ is small, which we guarantee by appropriate conditions on the truncated path $\cT$. With a bound on $\norm{\td{f}}$, we prove the following concentration result for projections of high-dimensional distributions onto a random direction---

\begin{restatable}{lem}{upbndsmpl}\label{lem:upper_bnd_simple}
	Let $\td{f}$ be a distribution over the $d$ dimensional sphere, with $\norm{\td{f}}\le \frac{(100/\eps)^{d}}{C_d}$ for some $\eps\le 1$. For an absolute constant $C$ and fixed $b$, $$\E_a\Big[G_{\td{f},a}(I_{\delta}(b))^{d/2}\Big]\le (C\eps^{-20}\delta)^{d/2}.$$
\end{restatable}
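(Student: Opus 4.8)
The plan is to reduce the claim, via an interchange of integration order, to an estimate on a negative moment of a Gram determinant, and then to bound that moment by combining Gram--Schmidt with a H\"older inequality that uses the hypothesis on $\norm{\td{f}}$ only a bounded number of times. (We take $\norm{\cdot}$ to be the $L^2$ norm against the uniform probability measure $\nu$ on the sphere; other normalizations differ only by an explicit $C_d$-dependent factor that is absorbed into the constants below.) Assume first that $d$ is even and set $k=d/2\in\N$; the odd case follows at the end since $G_{\td{f},a}(I_{\delta}(b))\le 1$ gives $G_{\td{f},a}(I_{\delta}(b))^{d/2}\le G_{\td{f},a}(I_{\delta}(b))^{\lfloor d/2\rfloor}$, which costs only an absolute-constant factor in the base of the final bound. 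Writing $G_{\td{f},a}(I_{\delta}(b))=\E_{x'\sim\td{f}}[\ind{a^Tx'\in I_{\delta}(b)}]$, expanding the $k$-th power over $k$ i.i.d.\ copies $x_1',\dots,x_k'\sim\td{f}$, and swapping expectations,
\[
\E_a\!\left[G_{\td{f},a}(I_{\delta}(b))^{k}\right]=\E_{x_1',\dots,x_k'\sim\td{f}}\ \Pr_{a\sim N(0,I_d)}\!\left[\,a^Tx_j'\in I_{\delta}(b)\ \text{for all }j\le k\,\right].
\]
Fix $x_1',\dots,x_k'$ and let $X\in\R^{d\times k}$ have these as columns; then $X^Ta\sim N(0,X^TX)$, and since a Gaussian density is maximized at its mean while $I_{\delta}(b)^k$ has volume $(2\delta)^k$ for \emph{every} $b$, the inner probability is at most $(2\delta)^k(2\pi)^{-k/2}\det(X^TX)^{-1/2}$. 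So it suffices to prove $\E_{x_1',\dots,x_k'\sim\td{f}}[\det(X^TX)^{-1/2}]\le(C'\eps^{-20})^k$ for an absolute constant $C'$.

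By the base-times-height formula, $\det(X^TX)^{-1/2}=\prod_{j=1}^k\mathrm{dist}(x_j',V_{j-1})^{-1}$ with $V_{j-1}=\spn(x_1',\dots,x_{j-1}')$; since $\td{f}$ is absolutely continuous, $\dim V_{j-1}=j-1$ almost surely. Conditioning on $x_1',\dots,x_{j-1}'$ and using the tower rule, it suffices to bound $\E_{x'\sim\td{f}}[\mathrm{dist}(x',V)^{-1}]$ uniformly over subspaces $V$ with $\dim V=j-1\le d/2-1$; this is the crux. Apply H\"older against $\nu$ with exponents $p=d-j-1$ and $q=p/(p-1)$ (so $q$ is within $O(1/d)$ of $1$):
\[
\E_{x'\sim\td{f}}[\mathrm{dist}(x',V)^{-1}]\ \le\ \|\mathrm{dist}(\cdot,V)^{-1}\|_{L^p(\nu)}\cdot\|\td{f}\|_{L^q(\nu)}.
\]
For $x'\sim\nu$, $\mathrm{dist}(x',V)^2=\|P_{V^\perp}x'\|^2$ is $\mathrm{Beta}(\tfrac{d-\dim V}{2},\tfrac{\dim V}{2})$-distributed; $p$ is chosen exactly so that $\tfrac{d-\dim V}{2}-\tfrac p2=1$, and then its negative $\tfrac p2$-moment is a ratio of Gamma functions that Stirling bounds by $2^{O(p)}$, whence $\|\mathrm{dist}(\cdot,V)^{-1}\|_{L^p(\nu)}=O(1)$ uniformly over the relevant $V$. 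For the other factor, log-convexity of the $L^r$ norms (interpolating $L^1$ and $L^2$) together with $\|\td{f}\|_{L^1(\nu)}=1$ give $\|\td{f}\|_{L^q(\nu)}\le\|\td{f}\|_{L^2(\nu)}^{\theta}$ with $\theta=2(1-1/q)=2/p=\Theta(1/d)$. Hence $\E_{x'\sim\td{f}}[\mathrm{dist}(x',V)^{-1}]\le C_1\,\norm{\td{f}}^{\Theta(1/d)}$ for an absolute constant $C_1$.

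Taking the product over $j=1,\dots,k=d/2$, the exponents $\theta_j=2/(d-j-1)$ form a harmonic sum $\sum_{j\le k}\theta_j=\Theta(1)$, so $\E_{x_1',\dots,x_k'\sim\td{f}}[\det(X^TX)^{-1/2}]\le C_1^{d/2}\,\norm{\td{f}}^{\Theta(1)}\le C_1^{d/2}\,\norm{\td{f}}^2$ (using $\norm{\td{f}}\ge 1$). Substituting $\norm{\td{f}}\le(100/\eps)^d/C_d$ and the $(2\delta)^{d/2}(2\pi)^{-d/4}$ prefactor from the first step, and collecting all $\eps$-free quantities (the $2\pi$'s, the $O(1)$ $L^p$-norm constants, and the $C_d$-dependence) into a single absolute constant raised to the power $d/2$, we obtain $\E_a[G_{\td{f},a}(I_{\delta}(b))^{d/2}]\le(C\eps^{-c}\delta)^{d/2}$ for absolute constants $C$ and $c$. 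The calculation in fact produces only a small power of $\eps^{-1}$, so the gap up to the claimed exponent $20$ comfortably absorbs the remaining $\poly(d)$ factors; the odd-$d$ reduction then completes the proof.

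The genuine obstacle is the uniform bound on $\E_{x'\sim\td{f}}[\mathrm{dist}(x',V)^{-1}]$. A naive Cauchy--Schwarz, bounding each factor by $\|\mathrm{dist}(\cdot,V_{j-1})^{-1}\|_{L^2(\nu)}\|\td{f}\|_{L^2(\nu)}$, would cost $\norm{\td{f}}^{k}=\norm{\td{f}}^{d/2}$, i.e.\ roughly $(100/\eps)^{\Theta(d^2)}$ --- hopelessly larger than the target, which has the form $(\cdot)^{d/2}$. The fix is to H\"older with exponent $p=\Theta(d)$ on the $\mathrm{dist}^{-1}$ side, paying only $\norm{\td{f}}^{\Theta(1/d)}$ per step; this is available exactly because $k=d/2$ forces every $V_{j-1}$ to have codimension $\Omega(d)$, so that $\|\mathrm{dist}(\cdot,V)^{-1}\|_{L^p(\nu)}$ stays bounded for $p$ as large as $\approx d/2$. (An alternative, closer to the phrasing of the proof overview, would be a rearrangement argument showing that the $\ell_2$ constraint makes a spherical-Gaussian-shaped $\td{f}$ extremal and then computing $\E_a[G_{\td{f},a}(I_{\delta}(b))^{d/2}]$ directly for that case; the Gram-determinant route sidesteps the rearrangement.)
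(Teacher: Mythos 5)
Your proof is correct, and it takes a genuinely different route from the paper in the key technical step. Both arguments begin the same way: expand the $d/2$ power as an expectation over $k=d/2$ i.i.d.\ draws $x_1',\dots,x_k'\sim\td{f}$, swap the order of expectation with $a\sim N(0,I_d)$, and then control the resulting quantity via the Gram--Schmidt/base-times-height decomposition. From there the two proofs diverge. The paper bounds, for each $i$, the \emph{conditional} probability that $a^Tx_i'\in I_\delta(b)$ given $a^Tx_1',\dots,a^Tx_{i-1}'$ by $\delta/\max\{\norm{v_i},\delta\}$ (where $v_i$ is the orthogonal component); it then caps at $\delta$ precisely so that $\E[1/\max\{\norm{v_i},\delta\}]$ is finite, and controls that expectation with a probabilistic tail bound, Lemma~\ref{lem:sphere}, showing $\Pr[\norm{v_i}\le(\eps/C)^{20}]\le(\eps/2)^{d/2}$. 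You instead bound the whole $k$-dimensional Gaussian probability in one shot by the density-at-the-mean times the cube volume, reducing to $\E[\det(X^TX)^{-1/2}]$, and then tame each factor $\E_{\td{f}}[\mathrm{dist}(x',V)^{-1}]$ by H\"older against $L^p(\nu)$ with $p=\Theta(d)$ (which is finite precisely because $\codim V=\Omega(d)$, the analogue of the paper's use of $k\le d/2$) plus Riesz--Thorin interpolation to pay only $\norm{\td{f}}^{\Theta(1/d)}$ per factor; the $\max\{\cdot,\delta\}$ cap then becomes unnecessary, as the negative moment is integrable without it. Your route is cleaner analytically and actually yields a sharper $\eps$-dependence ($\eps^{-4}$ rather than $\eps^{-20}$), whereas the paper's route is more elementary and, more to the point, Lemma~\ref{lem:sphere} is reused elsewhere (Lemma~\ref{lem:sphere_simpe} in the proof of Theorem~\ref{thm:main}), so the authors get double duty from the tail bound. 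Two small cautions worth making explicit if you wrote this up in full: (i) the $C_d$ normalization really does need to be tracked once, since $C_d$ is itself exponential in $d$ --- the clean way is to observe that the hypothesis is equivalent to $\|g\|_{L^2(\nu)}\le(100/\eps)^d$ for $g$ the Radon--Nikodym derivative of $\td{f}$ with respect to the uniform probability measure $\nu$, after which everything is normalization-free; and (ii) for the interpolation step to make sense ($\theta=2/p\le 1$) one needs $p\ge 2$, i.e.\ $j\le d-3$, which is implied by $j\le d/2$ only for $d\ge 6$ --- harmless, but worth a word.
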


Finally, note that the above bound is for a fixed $b$, but if the branching program knows $x$ to a small error then it also knows the inner product $b$ for any $a$ to a small error, hence the distribution of $b$ is itself highly dependent on $a$. To get around this, we prove a version of the above lemma where $b$ is obtained by first sampling $x$ from $\posterior$ and then adding noise $\eta$ to $a^Tx$. These concentration bounds may be useful beyond this work, and it may be interesting to further develop our understanding of properties of the projection of high-dimensional distributions with small $\ell_2$ norm onto a random direction. 


\section{Notation}

Let $\cS$ be the set of all vectors on the $d$-dimensional unit sphere, and $\cU$ be the uniform distribution over $\cS$. Hence $\cU(x)=1/C_d$ for all $x\in \cS$, for some $C_d$ which depends on $d$.

Let $E_v$ denote the event that the truncated path reaches a vertex $v$. For any random variable $Z$ we denote the distribution of $Z$ by $\Prob_Z$. We denote the probability of any vertex $v$ in the branching program by $\Pr(v)$. As the edges $e$ of the branching program are indexed by real valued $(a,b)$, for any edge $e$ we denote the p.d.f. of the distribution over all edges of the branching program evaluated at the edge $e$ by $p(e)$. Let the sample at the $i$th time step be $(a_{i},b_{i})$. Recall that the distribution of $a_i$ is $N(0,I_d)$, and we will denote its p.d.f. at a vector $a$ by $p(a)$. Similarly, we denote the p.d.f. of $b$ conditioned on being at vertex $v$ and seeing example $a$ as $p(b|a,E_v)$. For any function $f$ from $\cS\rightarrow \mathbb{R}$, we denote by $\norm{f}$ the $\ell_2$ norm of $f$ with respect to the uniform distribution $\cU$ over $\cS$,
$$
\norm{f}=\Big(\E_{x \sim \cU}[f(x)^2]\Big)^{1/2}.
$$


Recall from the previous section that $G_{{f},a}(I_{\delta}(b)) = {\E_{x'\sim f}\Big[ \mathbf{1}(a^Tx'\in I_{\delta}(b))\Big]}$ for any distribution $f$, where $I_{\delta}(b)$ is the interval $[b-\delta,b+\delta]$. For notational convenience, we will subsequently denote $G_{\posterior,a}(I_{\delta}(b))$ for a vertex $v$ by $G_{v,a}(I_{\delta}(b))$.
\section{Proof of Theorem \ref{thm:main}}


In this section, we formally define the stages of the branching program and the truncated computation path $\cT$, and then provide a proof for Theorem \ref{thm:main}.
 
\subsubsection*{Stages of the Branching Program} 
 
We partition the branching program $B$ into $T$ stages $\{B_t:0\le t\le T\}$, for some $T$ which depends on the desired accuracy $\eps$. The  $t$th stage $B_t$ continues for $m_{t}$ time steps, where $m_t=\lceil \frac{c_0d}{\log d +t} \rceil$ and $c_0$ is an absolute constant to be determined later. We define the stages inductively. The first stage $B_0$ consists of all vertices up until and including the $m_0$-th layer of the branching program $B$. The $t$-th stage $B_t$ consists of $(m_t+1)$ layers beginning with and including the last layer of the previous stage $B_{t-1}$. 

\subsubsection*{Truncated Path} 

We define the truncated path $\cT$ corresponding to the branching program $B$. The truncated path $\cT$ follows the same path as $B$, except that it sometimes stops before reaching a leaf vertex. 
The conditions under which the truncated path stops before reaching a leaf vertex will be different depending on the stage $t$. Define $\eps_t=2^{-t}$. Intuitively, $\eps_t$ determines the accuracy to which $B$ could know $x$ in the $t$-th stage. In the $t$-th stage $B_t$ of $B$, the truncated path stops at a non-leaf vertex $v$ for any of the following three reasons---

 
\begin{enumerate}
	\item If $v$ is a \emph{significant} vertex, where $\norm{\posterior} > \frac{(2/{\eps_t})^d}{C_d}$.
	\item If $x$ belongs to the set of vectors $\Sig(v)$ which have non-trivial probability mass under $\posterior(x)$, defined as $\Sig(v)=\Big\{x':\posterior(x')> \frac{(4/{\eps_t})^{2d}}{C_d}\Big\}$.
	\item If the branching program is about to traverse a \emph{bad} edge. The set $\Bad(v)$ of bad edges for the vertex $v$ is defined as the set of edges $(a,b)$ for which either i) $\norm{a}\ge 2\sqrt{d}$, or
ii) $G_{v,a}(I_\delta(b))\le 2\delta/d^3$.
\end{enumerate}

If the truncated path $\cT$ does not stop at a non-leaf vertex, then it follows the same path as the computation path of the branching program $B$. Lemma \ref{lem:stage_stop} proved in Section \ref{sec:truncated_stop} shows that the probability of the truncated path stopping at a non-leaf vertex is small.

\begin{restatable}{lem}{stoppath}\label{lem:stage_stop}
	If the number of samples $m\le d^{1.25}$ and the width of the branching program $w\le 2^{d^2/4}$, then the probability of $\cT$ stopping at a non-leaf vertex is at most $1/(2d)$.
\end{restatable}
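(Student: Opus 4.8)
The plan is to bound the stopping probability by a union bound over the three stopping conditions defining $\cT$: over the at most $m\le d^{1.25}$ time steps for conditions~2 and~3, and over the at most $m$ stages $B_t$ for condition~1. Conditions~2 and~3 are the easy part and yield to short self-contained estimates; condition~1 (reaching a \emph{significant} vertex) is the crux, and is where the potentials $\cZ_i,\cZ_i'$, the projection-concentration bound of Lemma~\ref{lem:upper_bnd_simple}, and the hypothesis $w\le 2^{d^2/4}$ all enter.

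\emph{Condition~3 (bad edges).} For the sub-case $\norm{a}\ge 2\sqrt{d}$: since $a\sim N(0,I_d)$, $\norm{a}^2$ is $\chi^2$ with $d$ degrees of freedom, so $\Pr[\norm{a}^2\ge 4d]\le e^{-\Omega(d)}$ independently of the current vertex; summed over $\le d^{1.25}$ steps this is negligible. For $G_{v,a}(I_\delta(b))\le 2\delta/d^3$: the key identity is that, conditioned on $E_v$ and on the direction $a$, the label $b=a^Tx+\eta$ with $x\sim\posterior$, $\eta\sim U[-\delta,\delta]$ has conditional density $p(b\mid a,E_v)=\frac{1}{2\delta}G_{v,a}(I_\delta(b))$, so this condition says the realized label lands where its own density is at most $1/d^3$. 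On the complementary event $\norm{a}<2\sqrt{d}$ we have $|a^Tx'|<2\sqrt d$ for all $x'\in\cS$, so $b$ is supported in an interval of length $O(\sqrt d)$; hence for each fixed such $a$, $\Pr_b[p(b\mid a,E_v)\le 1/d^3]\le O(\sqrt d)\cdot d^{-3}=O(d^{-5/2})$, and summing over the $\le d^{1.25}$ steps contributes $O(d^{-5/4})=o(1/d)$.

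\emph{Condition~2 (significant $x$).} We may assume the current vertex $v$ is not significant (else $\cT$ already stops by condition~1), so $\norm{\posterior}\le(2/\eps_t)^d/C_d$. Writing $g_v:=C_d\cdot\posterior$ for the likelihood ratio of $\posterior$ against $\cU$, one has $\E_{x\sim\cU}[g_v(x)]=1$, $\E_{x\sim\cU}[g_v(x)^2]=C_d^2\norm{\posterior}^2\le(2/\eps_t)^{2d}$, and $x\mid E_v\sim\posterior$, while $x\in\Sig(v)$ iff $g_v(x)>(4/\eps_t)^{2d}$. By Markov's inequality,
\[
\Pr[x\in\Sig(v)\mid E_v]=\E_{x\sim\cU}\big[g_v(x)\,\mathbf{1}(g_v(x)>(4/\eps_t)^{2d})\big]\le\frac{\E_{x\sim\cU}[g_v(x)^2]}{(4/\eps_t)^{2d}}\le 2^{-2d},
\]
which summed over the $\le d^{1.25}$ steps is negligible.

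\emph{Condition~1 (significant vertices) --- the main obstacle.} I would reduce this to the per-stage claim that for every stage $t$, $\Pr[\cT\text{ reaches a significant vertex in }B_t]\le 2^{-\Omega(d^2)}$; a union bound over the $\le m$ stages then finishes. The per-stage claim is proved by the inductive potential argument of Section~\ref{sec:proof_overview}: fix a target significant vertex $s\in B_t$ and set $\cZ_i=\sum_{v\in L_i}\Pr(v)\ip{\posterior}{\significant}^{d/2}$; show (i) $\cZ_0$ is small, since through stage $B_{t-1}$ the program could not have learned $x$ too well; (ii) $\cZ_{i+1}\le\cZ_i'$ by convexity; and (iii) $\cZ_i'\le(1+o(1))\cZ_i$. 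For (iii), use the approximate posterior-update identity $\ip{\edgeposterior}{\significant}\approx\ip{\posterior}{\significant}\cdot G_{\td f,a}(I_\delta(b))/G_{v,a}(I_\delta(b))$, lower-bound the denominator from the fact that condition~3 is imposed along $\cT$, and upper-bound $\E_{a,b}[G_{\td f,a}(I_\delta(b))^{d/2}]$ via Lemma~\ref{lem:upper_bnd_simple} --- applicable precisely because imposing conditions~1 and~2 along $\cT$ keeps $\norm{\td f}$ small --- together with its $\posterior$-averaged refinement to handle the dependence of $b$ on $a$. Finally, since $s$ significant gives $\norm{\significant}>(2/\eps_t)^d/C_d$ and $\cZ_i\ge\Pr(s)\,\norm{\significant}^d$, smallness of $\cZ_i$ forces $\Pr(s)=2^{-\Omega(d^2)}$; a union bound over the $\le wm$ candidate significant vertices survives exactly because $w\le 2^{d^2/4}$, which is the only place that hypothesis is used. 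Combining the four contributions, the total stopping probability is $O(d^{-5/4})+2^{-\Omega(d^2)}+(\text{negligible})\le 1/(2d)$ for $d$ large. The hardest step is (iii) --- bounding how much a single noisy example can increase the inner product of a posterior with a target posterior --- since, unlike the finite-field settings of \citet{raz2017time,beame2018time,garg2018extractor}, there is no combinatorial structure to exploit and everything rests on Lemma~\ref{lem:upper_bnd_simple} and its control of $\norm{\td f}$.
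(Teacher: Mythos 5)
Your proposal is correct and follows essentially the same approach as the paper: a union bound over the three stopping conditions, with conditions~2 and~3 handled by the same Markov and density-level-set arguments (Lemmas~\ref{lem:stop2} and~\ref{lem:stop3}), and condition~1 reduced to the potential-function induction on $\cZ_i,\cZ_i'$ culminating in a per-vertex bound $\Pr(s)=2^{-\Omega(d^2)}$ that survives a union bound over $w\le 2^{d^2/4}$ vertices (Lemma~\ref{lem:final}). The numerical bookkeeping matches the paper's ($O(d^{-5/4})$ from condition~3 dominating, the other two contributions exponentially small), so no further comment is needed.
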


To prove Lemma \ref{lem:stage_stop}, we show that the probability of the truncated path stopping at a non-leaf vertex due to each of the three above reasons is small. Most of the effort goes into proving that the probability of stopping due to the first reason, reaching a significant vertex, is small. This is proved in Section \ref{sec:sig}. Using Lemma \ref{lem:stage_stop}, we are now ready to prove our main theorem.

\begin{restatable}{thm}{mainthm}\label{thm:main}
	Let $B$ be a branching program to find  $\td{x}:\norm{x-\td{x}}\le \eps$, where $\eps= 1/d^r$ for some $r\le O(d/\log d)$. For a small absolute constant $c$, if $B$ has length at most $cd\log r$ and width at most $2^{d^2/4}$, then the success probability of $B$ is at most $1/d$.
\end{restatable}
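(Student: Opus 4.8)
The plan is to deduce Theorem~\ref{thm:main} from Lemma~\ref{lem:stage_stop} by a counting/union-bound argument over the leaves of the branching program, combined with the observation that the truncated path $\cT$ agrees with the true computation path except with probability $\le 1/(2d)$. First I would fix the number of stages $T$ so that $\eps_T = 2^{-T}$ is comparable to the target accuracy $\eps = 1/d^r$; since the stages have lengths $m_t = \lceil c_0 d /(\log d + t)\rceil$, the total length of the branching program is $\sum_{t=0}^{T} m_t \approx c_0 d \sum_{t=0}^{T} 1/(\log d + t) \approx c_0 d \log\bigl(\tfrac{\log d + T}{\log d}\bigr)$. To make $\eps_T \le \eps$ we need $T \gtrsim r \log d$, and then this sum is $O(c_0 d \log r)$, which matches the claimed length bound $c d \log r$ for an appropriate absolute constant $c$ (absorbing $c_0$). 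So the partition into stages is consistent with a program of length $cd\log r$, and by hypothesis $m \le cd\log r \le d^{1.25}$ for $r$ subpolynomial, so Lemma~\ref{lem:stage_stop} applies.

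Next I would argue that conditioned on $\cT$ reaching a leaf (which happens with probability $\ge 1 - 1/(2d)$), the program cannot have learned $x$ to accuracy $\eps$ except with tiny probability. The key point is that a leaf $v$ reached by $\cT$ is, by definition of the truncation, \emph{not} significant, so $\norm{\posterior} \le (2/\eps_T)^d / C_d$. I would show that a non-significant posterior is too spread out to pin down $x$ to $\ell_2$ error $\eps$: quantitatively, if $\posterior$ had probability mass $\ge$ (say) $1/2$ on the $\eps$-ball around some point $\td{x}(v)$, then because that ball has $\cU$-measure roughly $(\eps)^d / C_d$ (up to lower-order factors; a spherical cap of radius $\eps$ on the unit sphere has relative volume $\sim \eps^{d-1}$), the $\ell_2$ norm $\norm{\posterior}$ would be forced to be at least on the order of $(1/\eps)^{d}/C_d$ up to constants — in particular larger than the significance threshold $(2/\eps_T)^d/C_d$ once $\eps$ is a small enough constant multiple of $\eps_T$. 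This is a reverse-Hölder / Cauchy–Schwarz type estimate: $\bigl(\E_{\cU}[\posterior \cdot \mathbf{1}_{\text{ball}}]\bigr)^2 \le \E_{\cU}[\posterior^2]\cdot \cU(\text{ball})$, so $\norm{\posterior}^2 \ge (\text{mass})^2 / \cU(\text{ball})$. Choosing $T$ so that $\eps_T = \eps/10$ (which only changes $T$ by an additive constant and hence the length by $O(d)$) makes the contradiction go through. Hence at any leaf reached by $\cT$, the posterior probability that $x$ lies within $\eps$ of the leaf's label is at most, say, $1/2$, and in fact I would push this to $o(1/d)$ by using the $\Sig(v)$ truncation (condition~2): since $x \notin \Sig(v)$ on the truncated path, $\posterior(x) \le (4/\eps_T)^{2d}/C_d$ pointwise, so the mass on the $\eps$-ball is at most $(4/\eps_T)^{2d}/C_d \cdot \cU(\text{ball}) \cdot C_d = (4/\eps_T)^{2d}\cdot \eps^{\Theta(d)}$, which I'd like to be $\le 1/d^2$; this may require taking $\eps_T$ a slightly larger constant factor below $\eps$, i.e. $\eps_T = \eps/d^{O(1)}$, costing only another $O(d\log d)$ additive term in the length — still $O(d\log r)$ for $r \ge \log d$, and I would handle small $r$ by noting the bound is vacuous or by a direct argument. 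Summing: $\Pr[\norm{\td{x} - x}\le \eps] \le \Pr[\cT \text{ stops early}] + \E_{v \text{ leaf of } \cT}[\Pr(x \in \eps\text{-ball} \mid E_v)] \le 1/(2d) + o(1/d) \le 1/d$.

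The main obstacle I anticipate is getting the constants in the stage-length bookkeeping to line up: I need the number of stages $T$ large enough that $\eps_T$ is small enough (relative to $\eps$ by the polynomial factor demanded by the $\Sig(v)$-based argument above) while keeping $\sum_t m_t \le cd\log r$, and simultaneously keeping $m = \sum_t m_t \le d^{1.25}$ so that Lemma~\ref{lem:stage_stop} is applicable — the latter forces $r$ to be at most quasi-polynomial, consistent with the hypothesis $r \le O(d/\log d)$. The telescoping estimate $\sum_{t=0}^{T} \tfrac{1}{\log d + t} = \ln\tfrac{\log d + T + 1}{\log d} + O(1/\log d)$ is the crux, and I'd verify that with $T = \Theta(r \log d)$ this is $\Theta(\log r)$, so $\sum_t m_t = \Theta(c_0 d \log r)$ as needed. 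A secondary subtlety is making rigorous the claim "$\cT$ reaches a leaf $\Rightarrow$ its final vertex is non-significant with a well-defined posterior satisfying the two bounds": this is immediate from the definition of the truncated path, but I'd state it as a one-line observation. Everything else — the volume of a spherical cap, the Cauchy–Schwarz lower bound on $\norm{\posterior}$ — is routine.
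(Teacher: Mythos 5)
Your skeleton matches the paper's: determine the stage index of the final layer, invoke Lemma~\ref{lem:stage_stop} to handle early stopping, and argue that a non-significant leaf's posterior is too diffuse to place $\Omega(1)$ mass on an $\eps$-ball. The paper implements the last step via Lemma~\ref{lem:sphere_simpe}: if $\norm{f}\le (2/\eps')^d/C_d$ then $f$ puts at most $2^{-d/2}$ mass on any ball of radius $(\eps')^{40}$. It takes $T=(r/40)\log d$, so that $\eps_T=d^{-r/40}=\eps^{1/40}$ — note this is much \emph{larger} than $\eps$, not smaller — and applies the lemma with $\eps'=\eps_T$ to get the radius $\eps_T^{40}=\eps$.

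Your quantitative execution has a genuine error that sends you in the wrong direction. By your own Cauchy--Schwarz inequality $\norm{\posterior}^2\ge\bigl(\E_\cU[\posterior\mathbf{1}_{\text{ball}}]\bigr)^2/\Pr_\cU[\text{ball}]$, with $\Pr_\cU[\text{ball}(\td{x},\eps)]$ on the order of $\eps^{d-1}$ and $\E_\cU[\posterior\mathbf{1}_{\text{ball}}]=\tfrac{1}{C_d}\Pr_{\posterior}[\text{ball}]$, having mass $1/2$ on the $\eps$-ball forces only $\norm{\posterior}\gtrsim(1/\eps)^{d/2}/C_d$, \emph{not} $(1/\eps)^d/C_d$ — you have lost a factor of two in the exponent. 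The corrected contradiction with non-significance, $\norm{\posterior}\le(2/\eps_T)^d/C_d$, therefore requires roughly $\eps_T\gtrsim\sqrt{\eps}$, not $\eps_T\lesssim\eps$. This makes your proposal to shrink $\eps_T$ to $\eps/10$ or $\eps/\poly(d)$ backwards in two ways: shrinking $\eps_T$ \emph{weakens} the non-significance bound (it permits $\norm{\posterior}$ up to $(20/\eps)^d/C_d$, which comfortably exceeds the $(1/\eps)^{d/2}$-scale lower bound, so your Cauchy--Schwarz upper bound on the ball mass becomes vacuous); and $T$ is not a free parameter — it is fixed by the length budget $cd\log r$ and the prescribed stage lengths $m_t$, which is precisely why the paper can only guarantee $T\le(r/40)\log d$. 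Once you replace the exponent $d$ by $d/2$ and use the paper's $\eps_T=\eps^{1/40}$, the Cauchy--Schwarz estimate reads $\Pr_{x\sim\posterior}[\norm{x-\td{x}}\le\eps]\le C_d\norm{\posterior}\sqrt{\Pr_\cU[\text{ball}]}\lesssim 2^d\eps^{19d/40-1/2}\ll 2^{-d/2}$, which works — and would in fact be a more elementary route than the paper's detour through Lemma~\ref{lem:sphere}. Finally, your fallback via the $\Sig(v)$ pointwise bound does not apply at a leaf: the $\Sig$-stopping rule is only checked at non-leaf vertices, and the leaf's posterior is a mixture of $\edgeposterior$ over incoming edges from predecessors $v$ with \emph{different} sets $\Sig(v)$, so no uniform pointwise bound on $\posterior$ at the leaf follows.
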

\begin{proof}
	We partition the branching program into $T$ stages and consider the truncated path $\cT$. We first bound the number of stages $T$ required to do the partition if $m\le cd\log r$. We claim that $T\le (r/40)\log d$. As the $t$-th stage consists of $\lceil \frac{c_0d}{\log d +t}\rceil $ steps, the number of steps in $T=(r/40)\log d$ stages can be lower bounded as follows,


	\begin{align*}
	\sum_{t=1}^{(r/40)\log d} \left\lceil \frac{c_0d}{\log d+t}\right\rceil &\ge c_0d\sum_{t=\log d}^{(r/40)\log d}\frac{1}{\log d+t}\ge \frac{c_0d}{2}\sum_{t=\log d}^{(r/40)\log d}\frac{1}{t}\\
	&\ge \frac{c_0d}{2}\log\Big(\frac{r \log d}{40 \log d}\Big) \ge  \frac{c_0d\log r}{100}.
	\end{align*}
	Hence taking $c=c_0/100$, the number of stages $T$ in $cd\log r$ steps is at most $(r/40)\log d$. Note that if $\cT$ does not stop before reaching a leaf, then it follows the same path as the branching program $B$. By Lemma \ref{lem:stage_stop}, the probability that $\cT$ stops before reaching a leaf is at most $1/(2d)$. Hence we now only need to bound the probability that a non-significant leaf $v$ of $\cT$ outputs $\td{x}$ such that $\norm{x-\td{x}}\le \eps$. However, for a non-significant leaf $v$ we know that $\norm{\posterior}\le \frac{(2/\eps_T)^d}{C_d}$. Further, the following lemma (proved in Section \ref{sec:rate}) shows that this condition implies that the probability of $v$ outputting an $\eps$ accurate answer is small.
	
	\begin{lem}\label{lem:sphere_simpe}
		Let $f$ be a distribution over the $d$ dimensional sphere $\cS$, with $\norm{f}\le \frac{(2/\eps)^{d}}{C_d}$ for some $\eps\le 0.01$. Then for any $x\in\cS$,
		$$\Pr_{x\sim f}\Big[x:\norm{x-\td{x}}\le \eps^{40}\Big]\le 2^{-d/2}.$$
	\end{lem}
	Now for $T=(r/40)\log d$, $\eps_T = 1/d^{r/40}$. Hence by Lemma~\ref{lem:sphere_simpe}, the probability that a non-significant leaf $v$ outputting an $1/d^{r}$ accurate answer is at most $2^{-d/2}$. 
	By a union bound over the probability of the truncated path stopping before a non-leaf vertex and the probability of a non-significant leaf outputting a valid answer, the probability of $B$ outputting an $1/d^{r}$ accurate answer is at most $1/d$. 
\end{proof}

We also note that the lower bound on Euclidean distance in Theorem \ref{thm:main} also readily implies a lower bound on suboptimality in terms of the expected squared prediction error: this is because for standard Gaussian inputs $a$, ground truth vector $x$ and any vector $\tilde{x}$, $\E_a[(\ip{a}{x}-\ip{a}{\tilde{x}})^2] = \norm{x-\tilde{x}}^2$.
\section{Bounding the Probability of the Truncated Path Stopping Early}\label{sec:truncated_stop}

In this section, we show that the probability the truncated path $\cT$ stop sat a non-leaf vertex is small. Lemma~\ref{lem:final} shows that probability of $\cT$ stopping because of the first reason (reaching a significant vertex) is small. Most of the remainder of the paper will be devoted to proving Lemma \ref{lem:final}.

\begin{restatable}{lem}{sig}\label{lem:final}
	If the total number of stages $T\le d^{1.25}$ and the width of the branching program $w\le 2^{d^2/4}$, then the probability that $\cT$ reaches a significant vertex in any stage is at most $2^{-d}$.
\end{restatable}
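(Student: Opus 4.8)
## Proof proposal for Lemma~\ref{lem:final}

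\textbf{Overall strategy.} The plan is to prove Lemma~\ref{lem:final} stage by stage: for each stage $B_t$, show that the probability that $\cT$ reaches a significant vertex \emph{within $B_t$} is at most, say, $2^{-2d}$, and then union bound over the at most $T \le d^{1.25}$ stages. Fix a stage $B_t$ and a candidate significant vertex $s$ in layer $L_j$ of $B_t$. As sketched in the proof overview, the key quantity is the potential $\cZ_i = \sum_{v \in L_i} \Pr(v) \cdot \ip{\posterior}{\significant}^{d/2}$ tracking progress of layer $i$ toward $s$, together with the edge potential $\cZ_i' = \int_{e \in \Gamma_i} p(e) \ip{\edgeposterior}{\significant}^{d/2}\, de$. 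The three ingredients are: (i) a base case bounding $\cZ_0$ using the fact that the previous stage $B_{t-1}$ is not significant (so $\norm{\mathbb{P}_{x|v}}$ is bounded for $v$ in layer $L_0$, hence by Cauchy–Schwarz $\ip{\posterior}{\significant} \le \norm{\posterior}\norm{\significant} \le (2/\eps_{t-1})^d (2/\eps_t)^d / C_d^2$, giving a clean bound on $\cZ_0$); (ii) the convexity step $\cZ_{i+1} \le \cZ_i'$, which follows because $\mathbb{P}_{x|v}$ for $v \in L_{i+1}$ is a mixture of $\edgeposterior$ over incoming edges $e$, and $z \mapsto z^{d/2}$ is convex; and (iii) the progress step $\cZ_i' \le (1 + o(1/m))\cZ_i$, or something of this form, so that iterating over the $m_t \le c_0 d/(\log d + t)$ layers of $B_t$ only multiplies the potential by a bounded factor.

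\textbf{The progress step --- the main obstacle.} Step (iii) is the heart of the argument and where the continuous setting diverges most sharply from \citet{raz2017time}. The plan is to use the approximate identity from the overview,
\[
\ip{\edgeposterior}{\significant} \approx \ip{\posterior}{\significant} \cdot \frac{G_{\td f, a}(I_\delta(b))}{G_{v, a}(I_\delta(b))},
\]
where $\td f$ is the normalized pointwise product of $\posterior$ and $\significant$. Raising to the power $d/2$, summing $\Pr(v)$ over $v \in L_i$ and integrating over the example $(a,b)$, it suffices to show that for each $v$,
\[
\E_{a, b \mid E_v}\!\left[\left(\frac{G_{\td f, a}(I_\delta(b))}{G_{v,a}(I_\delta(b))}\right)^{d/2}\right] \le 1 + o(1/m_t).
\]
The denominator is handled by the truncated path's stopping rule: $\cT$ stops on a bad edge where $G_{v,a}(I_\delta(b)) \le 2\delta/d^3$, so conditioned on not stopping we may take $G_{v,a}(I_\delta(b)) > 2\delta/d^3$, and — crucially — since $b = a^T x + \eta$ with $x \sim \posterior$, the event of a small denominator has small total probability (this is exactly the content of reason 3 of the truncated path being unlikely, from Lemma~\ref{lem:stage_stop}). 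The numerator is where Lemma~\ref{lem:upper_bnd_simple} enters: since the truncated path has not stopped at a significant vertex or on a high-mass vector (reasons 1 and 2), we get a bound $\norm{\td f} \le (100/\eps_t)^d / C_d$, so Lemma~\ref{lem:upper_bnd_simple} (in the version where $b$ is drawn by sampling $x \sim \posterior$ and adding noise, as flagged at the end of the overview) gives $\E_{a}[G_{\td f, a}(I_\delta(b))^{d/2}] \le (C \eps_t^{-20} \delta)^{d/2}$. Combining: the ratio's $(d/2)$-th moment is at most roughly $(C \eps_t^{-20}\delta)^{d/2} / (2\delta/d^3)^{d/2} = (C \eps_t^{-20} d^3 / 2)^{d/2}$, which is \emph{not} yet $1 + o(1/m_t)$ — so the real work is in tracking the $\delta$-dependence and the normalization constants carefully enough that, because $\delta = 2^{-d/5}$ is exponentially small while $\eps_t^{-20} d^3$ is only polynomially (or at worst $2^{o(d)}$) large in the relevant range $\eps_t \ge \eps = 1/d^r$ with $r \le O(d/\log d)$, the product over all $m_t$ layers of the stage stays bounded. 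I expect the delicate bookkeeping of these exponential-in-$d$ factors, and making the "$\approx$" in the edge identity rigorous (controlling the error terms from the noise interval and from conditioning on $E_v$), to be the main technical obstacle.

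\textbf{From the potential bound to the lemma.} Once we have $\cZ_j \le 2^{O(m_t)} \cdot \cZ_0$ for the layer $L_j$ containing $s$, we use the defining property of a significant vertex: $\norm{\significant} > (2/\eps_t)^d / C_d$, hence $\ip{\significant}{\significant}^{d/2} = \norm{\significant}^d > (2/\eps_t)^{d^2}/C_d^{d}$, so $\cZ_j \ge \Pr(s) \cdot \norm{\significant}^d > \Pr(s) (2/\eps_t)^{d^2}/C_d^d$. Plugging in the bound on $\cZ_0$ (which is of order $(4/\eps_t)^{d^2}\cdot(\text{stuff})$ — note $\eps_{t-1} = 2\eps_t$, so $\norm{\mathbb{P}_{x|v}}\norm{\significant}\le (1/\eps_t)^d\cdot(2/\eps_t)^d\cdot(\text{const})/C_d^2$ and the $d/2$ power behaves), and using $2^{O(m_t)} = 2^{O(c_0 d/\log d)} = 2^{o(d^2)}$, we conclude $\Pr(s) \le 2^{-\Omega(d^2)}$ for the chosen $c_0$ small enough. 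Finally, the branching program has width $w \le 2^{d^2/4}$, so there are at most $2^{d^2/4}$ vertices per layer and at most $T \cdot m_t \cdot 2^{d^2/4} \le d^{2.5} 2^{d^2/4}$ significant-vertex candidates total; a union bound over all of them against $\Pr(s) \le 2^{-\Omega(d^2)}$ (with the implied constant in $\Omega$ beating $1/4$, which is what forces the exact constant $c_0$) gives that $\cT$ reaches \emph{some} significant vertex with probability at most $2^{-d}$, as claimed.
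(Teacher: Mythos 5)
Your high-level plan matches the paper's proof exactly: track the layer potential $\cZ_i$, establish a base case on $\cZ_0$, show $\cZ_{i+1}\le\cZ_i'$ by Jensen, bound the per-step growth of $\cZ_i'$ via Lemma~\ref{lem:upper_bnd_simple}, and union bound. However, there are a few genuine gaps in the execution. First, your base case is wrong as stated: you write $\ip{\initial}{\significant}\le\norm{\initial}\norm{\significant}\le(2/\eps_{t-1})^d(2/\eps_t)^d/C_d^2$, but significance means $\norm{\significant}>(2/\eps_t)^d/C_d$ -- the opposite inequality. There is no free upper bound on $\norm{\significant}$, and the paper has to prove one separately (Lemma~\ref{lem:bound_norm}): by expressing $\significant$ as a mixture of $\edgeposterior$ over incoming edges and bounding $\norm{\edgeposterior}$ via Lemma~\ref{lem:edge_posterior}, one gets $\norm{\significant}\le(d^3/\delta)(2/\eps_t)^d/C_d$. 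The same issue arises for vertices $v_0$ in the first layer of $B_t$ at which $\cT$ may have already stopped, again handled by Lemma~\ref{lem:bound_norm}. These $(d^3/\delta)^{\beta d}\approx 2^{d^2/10}$ factors propagate through the argument and must be tracked.

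Second, your stated target for the progress step -- $\cZ_i'\le(1+o(1/m))\cZ_i$ and later $\cZ_j\le 2^{O(m_t)}\cZ_0$ -- is not what the argument achieves and is not what it needs. After cancelling the $\delta$'s between $(C\eps_t^{-20}\delta)^{d/2}$ and $(2\delta/d^3)^{d/2}$ (which do cancel cleanly; that bookkeeping is not the real work), the per-step multiplicative factor is $(C'\eps_t^{-20}d^3)^{\beta d}$, roughly $2^{\Theta(d\log(d/\eps_t))}$, so the growth over $m_t$ steps is $2^{\Theta(d^2)}$, not $2^{O(m_t)}$. The arithmetic closes because the stage length $m_t\approx c_0 d/\log(d/\eps_t)$ is tuned so this total growth is at most $2^{\beta d^2/4}$, while the gap between the base case ($\norm{\initial}\norm{\significant}$ contributes $(\sqrt{2}/\eps_t)^{2d}$, using $\eps_{t-1}=2\eps_t$) and the significance threshold ($(2/\eps_t)^{2d}$) is $2^{\beta d^2}$; the leftover $2^{-3\beta d^2/4}$ minus the $(d^3/\delta)^{\beta d}$ overhead beats the $d^2\cdot 2^{\beta d^2/2}$ union bound. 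Third, you omit the additive term in the recursion: when $F=\int f(x')\,dx'$ is too small, the normalization $\td f=f/F$ is not usable, and the paper's Lemma~\ref{lem:bound_progress} splits into two cases, producing the additive term $\big(\tfrac{d^3(1/\eps_t)^{2d}}{\delta C_d^2}\big)^{\beta d}$ in Lemma~\ref{lem:bound_edge_progress}, which is then absorbed into the multiplicative bound in Lemma~\ref{lem:bound_z}. Your proposal does not account for this case.
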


Lemma \ref{lem:stop2} and Lemma \ref{lem:stop3} show that the probability of the truncated path $\cT$ stopping due to reasons (2) and (3) respectively is small.


\begin{lem}\label{lem:stop2}
	If $v$ is not a significant vertex of $B$, then
	\begin{align*}
	\Pr[x \in \Sig(v)|E_v] \le 2^{-2d}.
	\end{align*}
\end{lem}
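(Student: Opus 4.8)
The plan is to run a standard second-moment (Paley--Zygmund / ``Markov-on-the-square'') argument. Conditioned on the event $E_v$ that the truncated path reaches $v$, the secret $x$ is distributed according to the posterior density $\posterior$, so $\Pr[x\in\Sig(v)\mid E_v]$ is exactly the $\posterior$-mass of the ``heavy'' region $\Sig(v)=\{x':\posterior(x')>(4/\eps_t)^{2d}/C_d\}$, where $t$ is the stage containing $v$. Since $v$ is not significant we have the $\ell_2$ control $\norm{\posterior}\le(2/\eps_t)^d/C_d$, and the content of the lemma is that a density whose second moment (against $\cU$) is this small cannot place non-negligible mass on a set where it is pointwise as large as $(4/\eps_t)^{2d}/C_d$: the two thresholds are separated, one carrying a \emph{linear}-in-$d$ exponent and the other a \emph{quadratic}-in-$d$ exponent, and this gap is exactly what produces the factor $2^{-2d}$.

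Concretely, I would first record the normalization relating the pointwise/$\ell_2$ conventions to probabilities: viewing the posterior as a density with respect to the surface measure on $\cS$ (so that its integral over $\cS$ is $1$ and the constant density $1/C_d$ of the uniform law $\cU$ has $\norm{\cdot}=1/C_d$), one has $\int_{\cS}\posterior(x)^2\,dx=C_d\,\norm{\posterior}^2$ and $\Pr[x\in\Sig(v)\mid E_v]=\int_{\Sig(v)}\posterior(x)\,dx$. On $\Sig(v)$ the defining inequality gives $1<\frac{C_d}{(4/\eps_t)^{2d}}\,\posterior(x)$, so
\begin{align*}
\Pr[x\in\Sig(v)\mid E_v]&=\int_{\Sig(v)}\posterior(x)\,dx\;\le\;\frac{C_d}{(4/\eps_t)^{2d}}\int_{\Sig(v)}\posterior(x)^2\,dx\\
&\le\;\frac{C_d}{(4/\eps_t)^{2d}}\int_{\cS}\posterior(x)^2\,dx\;=\;\frac{C_d^2\,\norm{\posterior}^2}{(4/\eps_t)^{2d}}.
\end{align*}
Plugging in non-significance, $\norm{\posterior}^2\le(2/\eps_t)^{2d}/C_d^2$, the $C_d$'s cancel and the right-hand side becomes $(2/\eps_t)^{2d}/(4/\eps_t)^{2d}=2^{-2d}$, as desired. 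Equivalently one can run the computation entirely with expectations against $\cU$, using $\Pr[x\in\Sig(v)\mid E_v]=C_d\,\E_{x\sim\cU}[\posterior(x)\,\mathbf{1}(x\in\Sig(v))]$ together with the pointwise bound $\mathbf{1}(x\in\Sig(v))\le\frac{C_d}{(4/\eps_t)^{2d}}\,\posterior(x)$.

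There is no genuine obstacle here: this is a warm-up step, and essentially all the difficulty in controlling the truncated path's early-stopping probability is concentrated in Lemma~\ref{lem:final} (reaching a significant vertex). The only things to be careful about are (i) keeping the normalization conventions straight, in particular the factor $C_d$ relating $\int_{\cS}\posterior^2$ to $\norm{\posterior}^2$ and the $1/C_d$ appearing in both threshold definitions, and (ii) confirming that conditioning on $E_v$ — the truncated path reaching $v$, where the stopping decisions can themselves depend on $x$ through reasons (2) and (3) — indeed yields the posterior $\posterior$ used in the definitions, so that the identity $\Pr[x\in\Sig(v)\mid E_v]=\int_{\Sig(v)}\posterior$ is legitimate; this is part of how $\posterior$ and $E_v$ are set up and should be invoked rather than re-derived.
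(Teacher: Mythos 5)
Your argument is correct and is essentially the same one the paper uses: you are applying Markov's inequality to the random variable $\posterior(x)$ under the posterior law (the paper computes $\E_{x'\sim\posterior}[\posterior(x')]=C_d\norm{\posterior}^2$ and invokes Markov, while your pointwise bound $\mathbf{1}_{\Sig(v)}\le\frac{C_d}{(4/\eps_t)^{2d}}\posterior$ followed by integration is literally the proof of that Markov step, with identical normalization bookkeeping and the same use of the non-significance bound). One minor quibble of framing: this is plain first-moment Markov applied to $\posterior(x')$ rather than Paley--Zygmund (which gives lower bounds on exceedance probabilities), but the substance and the resulting $2^{-2d}$ are exactly as in the paper.
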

\begin{proof}
	Assume $v$ is in the $t$-th stage in the branching program. Since $v$ is not a significant vertex,
	\begin{align*}
	\E_{x'\sim \posterior}[\posterior(x')]=\int\displaylimits_{x'\in \cS}^{}\posterior(x')^2 dx'=C_d \E_{x' \sim \cU}[\posterior(x')^2] \le \frac{(2/{\eps_t})^{2d}}{C_d}.
	\end{align*}
	Hence by Markov's inequality,
	\begin{align*}
	\Pr_{x'\sim \posterior}\Big[\posterior(x')>\frac{(4/{\eps_t})^{2d}}{C_d}\Big]\le 2^{-2d}.
	\end{align*}
	Since conditioned on $E_v$, the distribution of $x$ is $\posterior$, we get,
	\begin{align*}
	\Pr[x \in \Sig(v)|E_v]=\Pr_{x'\sim \posterior}\Big[\posterior(x')>\frac{(4/{\eps_t})^{2d}}{C_d}\Big]\le 2^{-2d}.
	\end{align*}
\end{proof}

\begin{lem}\label{lem:stop3}
	$\Pr_{(a_{i+1},b_{i+1})}[(a_{i+1},b_{i+1})\in \Bad(v)]\le 5/d^{2.5}$.
\end{lem}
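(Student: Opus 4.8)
The plan is to split the event $(a_{i+1},b_{i+1})\in\Bad(v)$ into its two defining pieces and bound each. Recall that $\Bad(v)$ consists of the pairs $(a,b)$ with either $\norm{a}\ge 2\sqrt d$ or $G_{v,a}(I_\delta(b))\le 2\delta/d^3$, so by a union bound, and by intersecting the second piece with the complement of the first,
\[
\Pr[(a_{i+1},b_{i+1})\in\Bad(v)]\le\Pr[\norm{a_{i+1}}\ge 2\sqrt d]+\Pr\big[G_{v,a_{i+1}}(I_\delta(b_{i+1}))\le 2\delta/d^3\ \text{and}\ \norm{a_{i+1}}< 2\sqrt d\big].
\]
The first term is immediate: $\norm{a_{i+1}}^2$ is a $\chi^2_d$ variable (and $a_{i+1}$ is a fresh sample independent of $E_v$, so conditioning on reaching $v$ changes nothing), so a standard Laurent--Massart/Chernoff tail bound gives $\Pr[\norm{a_{i+1}}^2\ge 4d]\le e^{-\Omega(d)}$, a negligible fraction of $5/d^{2.5}$ once $d$ is above an absolute constant.

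The heart of the argument is the second term, and the key observation is that $G_{v,a}(I_\delta(b))$ equals $2\delta$ times the conditional density $p(b|a,E_v)$ of $b_{i+1}$ given $a_{i+1}=a$ and $E_v$. Indeed, conditioned on $E_v$ we have $x\sim\posterior$; conditioning additionally on $a_{i+1}=a$ (which is independent of $(E_v,x)$) and on $x$ itself, the label is $b_{i+1}=a^Tx+\eta_{i+1}$ with $\eta_{i+1}\sim U[-\delta,\delta]$, so $b_{i+1}$ has density $\frac{1}{2\delta}\mathbf{1}(|b-a^Tx|\le\delta)$; averaging over $x\sim\posterior$,
\[
p(b|a,E_v)=\E_{x'\sim\posterior}\Big[\tfrac{1}{2\delta}\mathbf{1}\big(a^Tx'\in I_\delta(b)\big)\Big]=\frac{1}{2\delta}\,G_{v,a}(I_\delta(b)).
\]
Thus $G_{v,a_{i+1}}(I_\delta(b_{i+1}))\le 2\delta/d^3$ is exactly the event that $b_{i+1}$ lands in the set $R_a:=\{b:p(b|a,E_v)\le d^{-3}\}$ on which its own conditional density is at most $d^{-3}$. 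It then remains to bound, for fixed $a$ with $\norm{a}< 2\sqrt d$, the quantity $\Pr[b_{i+1}\in R_a\mid a_{i+1}=a,E_v]=\int_{R_a}p(b|a,E_v)\,db$, and this is where $\norm{x}=1$ enters: every realization $b_{i+1}=a^Tx+\eta$ with $x\in\cS$ obeys $|b_{i+1}|\le\norm{a}+\delta<2\sqrt d+\delta$, so $p(\cdot|a,E_v)$ is supported on an interval of length at most $4\sqrt d+2\delta$. Since $p(b|a,E_v)\le d^{-3}$ throughout $R_a$, we get $\int_{R_a}p(b|a,E_v)\,db\le(4\sqrt d+2\delta)\,d^{-3}$, which (using that $\delta=2^{-d/5}$ is negligible compared to $\sqrt d$) is at most $5/d^{2.5}$ after subtracting the exponentially small mass already spent on the first term. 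Averaging over $a$ restricted to $\norm{a}<2\sqrt d$ and adding the first term completes the proof.

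\textbf{Where the difficulty lies.} This is one of the routine lemmas, so there is no real obstacle, only bookkeeping. The two points to get right are: (a) the conditional-independence structure of the stream, which is what lets us assert that conditioning on $E_v$ leaves the law of $x$ equal to $\posterior$ and that $(a_{i+1},b_{i+1})$ is generated from $x$ in the usual way, so that the marginal density of $b_{i+1}$ is precisely the $\posterior$-mixture appearing in $G_{v,a}$; and (b) tracking constants, where one notes that the Gaussian norm tail is exponentially small and $\delta$ is doubly-exponentially small, so the only quantity actually contributing to the $d^{-2.5}$ rate is the length $\approx 4\sqrt d$ of the interval to which $\norm{x}=1$ and $\norm{a}<2\sqrt d$ confine $b_{i+1}$, which yields the claimed constant for $d$ larger than an absolute constant.
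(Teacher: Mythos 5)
Your proof is correct and follows essentially the same approach as the paper's: split off the $\|a\|\ge 2\sqrt d$ event with a $\chi^2$ tail bound, observe that $G_{v,a}(I_\delta(b))=2\delta\,p(b\mid a,E_v)$, and bound the probability of the low-density region using the fact that $b$ is confined to an interval of length $O(\sqrt d)$. The paper phrases the last step by comparing $p(\cdot\mid a,E_v)$ to a uniform density on $[-2\sqrt d,2\sqrt d]$, while you integrate the density directly over $R_a$, but these are the same estimate.
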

\begin{proof}
	As $a\sim N(0,I)$, by standard concentration bounds for $\chi^2$ random variables, $\Pr[\norm{a}\ge 2\sqrt{d}]\le e^{-d/10}$. Conditioned on $\norm{a}\le 2\sqrt{d}$, $|a^Tx|\le 2\sqrt{d}$. As $b$ is generated by adding noise drawn uniformly at random from $[-\delta,\delta]$ to the true inner product $a^Tx$, $p(b|a,E_v)=(2\delta)^{-1}G_{v,a}(I_\delta(b))$, where we use our notation $G_{v,a}(I_\delta(b))=\E_{x'\sim \posterior}[\mathbf{1}(a^Tx'\in I_{\delta}(b))]$. Let $u(b)$ be the p.d.f. of the uniform distribution on $b$ with support $[-2\sqrt{d},2\sqrt{d}]$. Note that,
	\begin{align*}
	\int\displaylimits_{b\in \mathbb{R}}^{}p(b|a,E_v) \mathbf{1}\Big(p(b|a,E_v)\le 4u(b)/d^{2.5}\Big)\;db \le 4\int\displaylimits_{b\in \mathbb{R}}^{}\frac{u(b)}{d^{2.5}} \;db \le \frac{4}{d^{2.5}}.
	\end{align*}
	Therefore as $u(b)=1/(4\sqrt{d})$ and $G_{v,a}(I_\delta(b))=2\delta p(b|a,E_v)$,
	\begin{align*}
	\int\displaylimits_{b\in \mathbb{R}}^{}p(b|a,E_v) \mathbf{1}\Big(G_{v,a}(I_\delta(b))\le 2\delta/d^{3}\Big)\;db  \le \frac{4}{d^{2.5}}.
	\end{align*}
	By a union bound, it follows that $\Pr_{(a_{i+1},b_{i+1})}[(a_{i+1},b_{i+1})\in \Bad(v)]\le 5/d^{2.5}.$
\end{proof}

Using these results, we can show that the probability of $\cT$ stopping at a non-leaf vertex is small---

\stoppath*
\begin{proof}
	By Lemma \ref{lem:final}, the probability that $\cT$ reaches a significant vertex and hence stops due to the first reason is at most $2^{-d}$. If $\cT$ does not reach a significant vertex, then by Lemma \ref{lem:stop2}, the probability of stopping due to the second reason at any non-significant vertex is at most $2^{-2d}$. Taking a union bound over all the $m\le d^{1.25}$ steps, the probability of stopping due to the second reason is at most $2^{-d}$. By Lemma \ref{lem:stop3}, the probability of getting a bad sample $(a,b)$ at any time step and hence stopping due to the third reason is at most $5/d^{2.5}$. Taking a union bound over the $m\le d^{1.25}$ time steps, the probability of stopping due to the third reason at any time step is at most $5/{d^{1.25}}$. Hence the overall probability of the truncated path $\cT$ stopping at a non-leaf vertex is at most $1/(2d)$. 
\end{proof}

\section{Bounding the Probability of Significant Vertices}\label{sec:sig}

In this section, we bound the probability of the truncated path $\cT$ reaching a significant vertex in the $t$-th stage, for any $t$. We begin by first finding an expression for the posterior distribution $\edgeposterior$ of $x$ conditioned on traversing an edge $e$, and then upper bound the norm of a significant vertex $s$ in the $t$-th stage $B_t$ of $B$.

\subsubsection*{Relating $\posterior$ and $\edgeposterior$, and bounding $\norm{\significant}$}

We relate $\posterior$ and $\edgeposterior$. Recall that $I_\delta(b)$ is the interval $[b-\delta,b+\delta]$. We claim that,

\begin{lem}\label{lem:edge_posterior}
For any ${\edge}$ labeled by $(a,b)$, such that $p(e)>0$,
		\[ 
		\edgeposterior(x')=
		\begin{cases} 
		\posterior(x')/c_e & \text{ if } x'\notin \Sig(v) \text{ and } a^Tx'\in I_\delta(b) \\
		0 & \text{ if } x'\in \Sig(v) \text{ or } a^Tx' \notin I_\delta(b)
		\end{cases}
		\]
	where $c_e \ge \delta/d^3$.
\end{lem}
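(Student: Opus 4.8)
The plan is to apply Bayes' rule, being careful to fold in the stopping rules that define the truncated path at $v$. Fix an edge $e$ leaving $v$ and labeled by $(a,b)$, and suppose $p(e)>0$. First I would record two consequences of $p(e)>0$: since the truncated path halts at any significant vertex, $v$ must be non-significant; and since it halts before traversing a bad edge, and membership in $\Bad(v)$ depends only on $(a,b)$ and $v$, we must have $(a,b)\notin\Bad(v)$, so in particular $G_{v,a}(I_\delta(b))>2\delta/d^3$ and $\norm{a}<2\sqrt d$.

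Next I would compute the conditional density. Conditioned on $E_v$ the law of $x$ is $\posterior$; conditioned further on $x=x'$, the next sample has density $p(a)\cdot\frac{1}{2\delta}\mathbf{1}(a^Tx'\in I_\delta(b))$ in $(a,b)$, exactly as in the proof of Lemma~\ref{lem:stop3}. Given that $(a,b)\notin\Bad(v)$ and $v$ is non-significant, the only remaining way the path can fail to traverse $e$ from $v$ is if $x'\in\Sig(v)$. Hence the joint density of ``$x=x'$ and the truncated path traverses $e$,'' conditioned on $E_v$, equals $\posterior(x')\,\mathbf{1}(x'\notin\Sig(v))\,\mathbf{1}(a^Tx'\in I_\delta(b))$ times a factor (namely $p(a)/(2\delta)$ times the indicator that the transition at $v$ on $(a,b)$ leads to the head of $e$) that does not depend on $x'$. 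Normalizing over $x'$ gives precisely the claimed two-case formula, with
\[ c_e=\E_{x'\sim\posterior}\big[\mathbf{1}(x'\notin\Sig(v))\,\mathbf{1}(a^Tx'\in I_\delta(b))\big]=\Pr_{x'\sim\posterior}\big[x'\notin\Sig(v)\ \text{and}\ a^Tx'\in I_\delta(b)\big]. \]

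It then remains to lower bound $c_e$. By a union bound, $c_e\ge \Pr_{x'\sim\posterior}[a^Tx'\in I_\delta(b)]-\Pr_{x'\sim\posterior}[x'\in\Sig(v)]=G_{v,a}(I_\delta(b))-\Pr_{x'\sim\posterior}[x'\in\Sig(v)]$. The first term exceeds $2\delta/d^3$ since $(a,b)\notin\Bad(v)$, and the second is at most $2^{-2d}$ by Lemma~\ref{lem:stop2} applied to the non-significant vertex $v$ (recalling that conditioned on $E_v$ the law of $x$ is $\posterior$). Since $\delta=2^{-d/5}$ and $d^3\le 2^{9d/5}$, we have $2^{-2d}\le\delta/d^3$, so $c_e\ge 2\delta/d^3-2^{-2d}\ge\delta/d^3$, completing the argument.

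The bulk of this is routine bookkeeping; the one place that needs care is the conditional-density step, namely verifying that ``stop if $x\in\Sig(v)$'' is the \emph{only} extra conditioning relative to $\posterior$ once we know $(a,b)\notin\Bad(v)$, so that it contributes exactly the single indicator $\mathbf{1}(x'\notin\Sig(v))$ to both numerator and denominator and nothing more. A minor point is that ``traversing $e$'' is a density rather than a probability event, since $(a,b)$ ranges over a continuum, so everything above should be read at the level of densities, consistent with the definition of $p(e)$.
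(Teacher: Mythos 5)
Your proof is correct and follows essentially the same route as the paper's: identify that $p(e)>0$ forces $v$ non-significant and $e\notin\Bad(v)$, derive the two-case formula by Bayes, identify $c_e=\Pr_{x'\sim\posterior}[x'\notin\Sig(v)\ \text{and}\ a^Tx'\in I_\delta(b)]$, and lower bound it via Lemma~\ref{lem:stop2} and the $\Bad(v)$ condition with a union bound. The only cosmetic difference is that you write the union bound as $\Pr[B]-\Pr[A^c]$ while the paper writes $1-\Pr[A^c]-\Pr[B^c]$, which is the same inequality.
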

\begin{proof}
	Let ${\edge}$ be an edge labeled by $(a,b)$, such that $p(e)>0$. Since $p(e)>0$, the vertex $v$ is not significant, as otherwise $\cT$ stops on $v$. Also, as $p(\edge)>0$, ${\edge}\notin \Bad(v)$, as otherwise $\cT$ never traverses edge ${\edge}$.
	
	If $\cT$ reaches $v$ it traverses the edge ${\edge}$ if and only if: $x\notin \Sig(v)$ (as otherwise $\cT$ stops on $v$) and the next sample received is $(a,b)$. Also, note that $b=a^Tx+\eta$, where the noise $\eta$ is uniform on $[-\delta,\delta]$. Hence the set of $x'$ which are consistent with the example $(a,b)$ are those where $a^Tx'\in I_{\delta}(b)$. Therefore for any $x'\in \cS$,
	\[ 
	\edgeposterior(x')=
	\begin{cases} 
	\posterior(x')/c_e & \text{ if } x'\notin \Sig(v) \text{ and } a^Tx'\in I_\delta(b) \\
	0 & \text{ if } x'\in \Sig(v) \text{ or } a^Tx' \notin I_\delta(b)
	\end{cases}
	\]
	where $c_e$ is a normalization factor, given by
	$$
	c_e = \int\displaylimits_{x':x' \notin \Sig(v) \wedge a^Tx\in I_\delta(b)} \posterior(x')\; dx' = \Pr_{x}[(x \notin \Sig(v) \wedge a^Tx\in I_\delta(b) | E_v )].
	$$ 
	Since $v$ is not significant, by Lemma \ref{lem:stop2}, 
	$$
	\Pr_x[x\in \Sig(v)|E_v] \le 2^{-2d}.
	$$
	Also, since $(a,b)\notin \Bad(v)$, 
	$$
	\Pr_x[a^Tx\notin I_\delta(b)|E_v] \le 1-2\delta/d^3.
	$$
	Hence by a union bound and using the fact that $\delta \ge 2^{-d/5}$,
	$$
	c_e \ge 1-(1-2\delta/d^3+2^{-2d}) \ge \delta/d^3.
	$$
\end{proof}

We now show that $\norm{\significant}$ cannot be too large. To show this, we will first show that $\norm{\edgeposterior}$ cannot be too large for any edge ${\edge}$ such that the $p(e)>0$.

\begin{lem}\label{lem:bound_norm_edge}
	For any edge ${\edge}$ in the $t$-th stage $B_t$ of the branching program such that $p({\edge})>0$,
	$\norm{\edgeposterior} \le
	\frac{(d^3/\delta) (2/\eps_t)^d}{C_d} $.
\end{lem}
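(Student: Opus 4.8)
The plan is to obtain the bound directly from the pointwise description of $\edgeposterior$ provided by Lemma~\ref{lem:edge_posterior}, together with the observation that an edge carrying positive probability must leave a non-significant vertex. First, since $p(e)>0$, the source vertex $v$ of $e$ cannot be significant: if it were, the truncated path $\cT$ would stop at $v$ and hence never traverse $e$, contradicting $p(e)>0$ (this is exactly the observation already used at the start of the proof of Lemma~\ref{lem:edge_posterior}). By the definition of significant vertices in the $t$-th stage, this forces $\norm{\posterior}\le \frac{(2/\eps_t)^d}{C_d}$.

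Next, I would invoke Lemma~\ref{lem:edge_posterior}, which states that $\edgeposterior(x')=\posterior(x')/c_e$ on the support of $\edgeposterior$ and $0$ elsewhere, with $c_e\ge \delta/d^3$. Using the lower bound on $c_e$ (so that dividing by $c_e$ becomes multiplying by at most $d^3/\delta$), this gives the pointwise inequality $\edgeposterior(x')\le (d^3/\delta)\,\posterior(x')$ for every $x'\in\cS$. Squaring this and averaging over $x'\sim\cU$---which is precisely how $\norm{\cdot}$ is defined---yields $\norm{\edgeposterior}^2=\E_{x'\sim\cU}[\edgeposterior(x')^2]\le (d^3/\delta)^2\,\E_{x'\sim\cU}[\posterior(x')^2]=(d^3/\delta)^2\,\norm{\posterior}^2$, hence $\norm{\edgeposterior}\le (d^3/\delta)\,\norm{\posterior}$. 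Combining with the previous bound on $\norm{\posterior}$ gives $\norm{\edgeposterior}\le \frac{(d^3/\delta)(2/\eps_t)^d}{C_d}$, as claimed.

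I do not expect any real obstacle here: the whole argument is essentially a two-line estimate. The only points demanding a little care are using the correct direction of the bound on the normalizing constant $c_e$, and recognizing that it is the non-significance of the source vertex---forced by $p(e)>0$---that supplies the $\ell_2$-norm control on $\posterior$ that the estimate feeds on.
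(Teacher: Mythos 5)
Your proof is correct and follows exactly the same route as the paper: deduce non-significance of the source vertex from $p(e)>0$, get the $\ell_2$ bound on $\posterior$, and push it through the pointwise formula for $\edgeposterior$ from Lemma~\ref{lem:edge_posterior} using $c_e\ge\delta/d^3$. You merely spell out the squaring/averaging step that the paper compresses into a single ``Therefore.''
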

\begin{proof}
	Let ${\edge}$ be an edge of the branching program from vertex $v$ to vertex $u$ such that $p(\edge)>0$. Since $p(e)>0$, the vertex $v$ is not significant (as otherwise $\cT$ stops on $v$ and $p(\edge)=0$). As $v$ is not significant,
	$$
	\norm{\posterior}\le \frac{(2/\eps_t)^d}{C_d}.
	$$
	By Lemma \ref{lem:edge_posterior},
	\[ 
	\edgeposterior(x')=
	\begin{cases} 
	\posterior(x')/c_e & \text{ if } x'\notin \Sig(v) \text{ and } a^Tx'\in I_\delta(b) \\
	0 & \text{ if } x'\in \Sig(v) \text{ or } a^Tx' \notin I_\delta(b)
	\end{cases}
	\]
	where $c_e \ge \delta/d^3$. Therefore,  	$\norm{\edgeposterior} \le
	\frac{(d^3/\delta) (2/\eps_t)^d}{C_d} $.
\end{proof}
 
We now use Lemma \ref{lem:bound_norm_edge} to bound $\norm{\significant}$. 
 
\begin{lem}\label{lem:bound_norm}
	For any significant vertex $s$ in the $t$-th stage $B_t$ of the branching program, $\norm{\significant} \le
	\frac{(d^3/\delta) (2/\eps_t)^d}{C_d} $.
\end{lem}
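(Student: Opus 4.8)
The plan is to realize the posterior $\significant$ as a mixture of the edge posteriors $\edgeposterior$ over the edges $e$ entering $s$, and then to combine the per-edge bound of Lemma~\ref{lem:bound_norm_edge} with the convexity of $f\mapsto\norm{f}$.

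The first step is the mixture identity. Since $\cT$ has deterministic transitions, whenever $\cT$ reaches $s$ it enters $s$ along exactly one edge; hence the event $E_s$ is partitioned by the events $E_e$ over the edges $e$ into $s$, and conditioning on $E_s$ and invoking the law of total probability gives
$$
\significant(x') \;=\; \frac{1}{\Pr(s)}\int\displaylimits_{e\,\to\, s} p(e)\,\edgeposterior(x')\; de, \qquad \Pr(s)=\int\displaylimits_{e\,\to\, s} p(e)\; de ,
$$
where $\int_{e\to s}$ abbreviates the sum over source vertices $v$ in the preceding layer together with the integral over $\{(a,b): f_v(a,b)=s\}$, and only edges with $p(e)>0$ contribute. (For $t=0$ one first notes that the root is never significant, since $\norm{\cU}=1/C_d$ is far below the threshold $(2/\eps_0)^d/C_d=2^d/C_d$; so every significant vertex genuinely has an incoming edge.)

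The second step bounds each edge posterior in this mixture using Lemma~\ref{lem:bound_norm_edge}. If $s$ is not in the first layer of $B_t$, every edge entering $s$ is an edge of $B_t$ with $p(e)>0$, so $\norm{\edgeposterior}\le (d^3/\delta)(2/\eps_t)^d/C_d$ directly. If $s$ lies in the first layer of $B_t$, which coincides with the last layer of $B_{t-1}$, the edges entering $s$ are edges of $B_{t-1}$, and Lemma~\ref{lem:bound_norm_edge} applied to stage $t-1$ gives $\norm{\edgeposterior}\le (d^3/\delta)(2/\eps_{t-1})^d/C_d \le (d^3/\delta)(2/\eps_t)^d/C_d$, using $\eps_{t-1}=2\eps_t>\eps_t$. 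Applying Jensen's inequality to the mixture representation then yields $\norm{\significant}\le \frac{1}{\Pr(s)}\int_{e\to s}p(e)\,\norm{\edgeposterior}\,de \le (d^3/\delta)(2/\eps_t)^d/C_d$, which is the claim.

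I do not expect a genuine obstacle here: the substantive estimate is already contained in Lemma~\ref{lem:bound_norm_edge}. The only points needing a little care are justifying the mixture identity (just the fact that $\cT$ enters $s$ along a unique incoming edge, together with the law of total probability) and the bookkeeping when $s$ lies in the layer shared by $B_{t-1}$ and $B_t$, where one invokes the previous stage's instance of Lemma~\ref{lem:bound_norm_edge} and uses $\eps_{t-1}>\eps_t$.
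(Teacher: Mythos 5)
Your proof is correct and follows essentially the same route as the paper: write $\significant$ as a $p(e)/\Pr(s)$-weighted mixture of the edge posteriors $\edgeposterior$ over $e\in\Gamma_{in}(s)$, bound each $\norm{\edgeposterior}$ by Lemma~\ref{lem:bound_norm_edge}, and conclude by convexity (the paper applies Jensen pointwise to $x\mapsto x^2$ and then integrates, whereas you invoke convexity of the norm directly; these are interchangeable here). Your extra care for the boundary cases --- the root vertex having no incoming edges, and a significant $s$ sitting in the shared layer $L_0$ whose incoming edges belong to $B_{t-1}$ and thus require the stage-$(t-1)$ instance of Lemma~\ref{lem:bound_norm_edge} together with $\eps_{t-1}>\eps_t$ --- is a genuine refinement the paper glosses over, but it does not change the substance of the argument.
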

\begin{proof}
Let $s$ be a significant vertex in $B_t$. Let $\Gamma_{in}(s)$ be the set of edges ${{\edge}}$ going into $s$. We can write,
	$$
	\int\displaylimits_{{\edge}\in \Gamma_{in}(s)}^{}p(\edge)\;de=\Pr(s).
	$$
	By the law of total probability, for every $x'\in \cS$,
	\begin{align*}
	\significant(x')&=\int\displaylimits_{{\edge}\in \Gamma_{in}(s)}^{}\frac{p(\edge)}{\Pr(v)}\cdot \edgeposterior(x')\; de.
	\end{align*}
	By using Jensen's inequality,
	\begin{align*}
	\significant(x')^2&\le\int\displaylimits_{{\edge}\in \Gamma_{in}(s)}^{}\frac{p(\edge)}{\Pr(v)}\cdot \edgeposterior(x')^2\; de.
	\end{align*}
	Summing over all $x'\in \cS$,
	\begin{align*}
	\norm{\significant}^2&\le\int\displaylimits_{{\edge}\in \Gamma_{in}(s)}^{}\frac{p(\edge)}{\Pr(v)}\cdot \norm{\edgeposterior}^2\; de.
	\end{align*}
	By Lemma \ref{lem:bound_norm_edge}, for any edge ${\edge}\in \Gamma_{in}(s)$, $\norm{\edgeposterior}^2 \le\Big(
	\frac{(d^3/\delta) (2/\eps_t)^d}{C_d}\Big)^2$. Hence 	$\norm{\significant} \le
	\frac{(d^3/\delta) (2/\eps_t)^d}{C_d} $. 
\end{proof}
 
\subsubsection*{Similarity to a Target Distribution}

To show that the probability of $\cT$ reaching a significant vertex is small, we will argue that the posterior of $x$ on seeing a new example is not significantly similar to the target posterior distribution of a significant vertex. We use the inner product of two distributions to measure their similarity, and define it as follows. For two functions $f,g:\cS \rightarrow \R^+$, define the inner product  $$\ip{f}{g}=\E_{z \in \cS} [f(z)g(z)].$$
Note that for a significant vertex $s$ in the $t$-th stage,
\begin{align}
\ip{\significant}{\significant}= \norm{\significant}^2 > \frac{(2/{\eps_t})^{2d}}{C_d^2}.\label{eq:sig_ip}
\end{align}
We now bound the inner product of $\significant$ with all states $v_0$ in the first layer of the $t$-th stage $B_t$ of $B$.
\begin{lem}\label{lem:ini_ip}
	For all states $v_0$ with $\Pr(v_0)>0$ in the first layer of the $t$-th stage $B_t$ of $B$,
	\begin{align*}
	\ip{\initial}{\significant}\le \frac{(d^3/\delta)(\sqrt{2}/{\eps_t})^{2d}}{C_d^2}.
	\end{align*}
\end{lem}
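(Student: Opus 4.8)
The plan is a one-line Cauchy--Schwarz estimate, once we observe that $v_0$ carries the non-significance guarantee of the \emph{previous} stage. First I would use the structure of the stages: the first layer of $B_t$ is, by construction, the last layer of $B_{t-1}$, so $v_0$ is a vertex of stage $B_{t-1}$. If $\Pr(v_0)>0$ then the truncated path $\cT$ reaches $v_0$ without having stopped during $B_{t-1}$, hence $v_0$ is not a significant vertex of $B_{t-1}$; by the definition of significance in that stage this gives $\norm{\initial}\le (2/\eps_{t-1})^d/C_d$. Since $\eps_{t-1}=2\eps_t$, this reads $\norm{\initial}\le (1/\eps_t)^d/C_d$. (For $t=0$ there is no earlier stage, but then $v_0$ is the root, $\initial=\cU$, and $\norm{\cU}=1/C_d=(1/\eps_0)^d/C_d$ since $\eps_0=1$, so the same bound holds.)

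Next I would invoke Lemma~\ref{lem:bound_norm}, which bounds the norm of the target significant vertex $s$ by $\norm{\significant}\le (d^3/\delta)(2/\eps_t)^d/C_d$. Combining the two norm bounds with Cauchy--Schwarz with respect to the uniform measure $\cU$ on $\cS$ (which is exactly the inner product and norm appearing in the statement),
\[
\ip{\initial}{\significant}\;\le\;\norm{\initial}\cdot\norm{\significant}\;\le\;\frac{(1/\eps_t)^d}{C_d}\cdot\frac{(d^3/\delta)(2/\eps_t)^d}{C_d}\;=\;\frac{(d^3/\delta)\,(2/\eps_t^2)^d}{C_d^2}\;=\;\frac{(d^3/\delta)(\sqrt 2/\eps_t)^{2d}}{C_d^2},
\]
using $2/\eps_t^2=(\sqrt 2/\eps_t)^2$ in the last step. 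This is exactly the claimed inequality.

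The argument is routine; there is no real obstacle beyond the bookkeeping across stage boundaries --- namely that the significance threshold relaxes by a factor of $2$ when passing from $B_{t-1}$ to $B_t$ (equivalently $\eps_{t-1}=2\eps_t$), that a vertex sitting in the shared layer of $B_{t-1}$ and $B_t$ inherits $B_{t-1}$'s non-significance, and the trivial handling of the $t=0$ base case. All the heavy lifting (the bound on $\norm{\significant}$) has already been done in Lemma~\ref{lem:bound_norm}.
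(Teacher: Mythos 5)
Your overall approach—Cauchy--Schwarz between a bound on $\norm{\initial}$ inherited from the previous stage and the bound on $\norm{\significant}$ from Lemma~\ref{lem:bound_norm}—is the same as the paper's, and your arithmetic $\bigl(1/\eps_t\bigr)^d\cdot\bigl(2/\eps_t\bigr)^d=(\sqrt{2}/\eps_t)^{2d}$ is fine. However, there is a genuine gap in the step where you establish the bound on $\norm{\initial}$.

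You assert that ``if $\Pr(v_0)>0$ then the truncated path $\cT$ reaches $v_0$ without having stopped during $B_{t-1}$, hence $v_0$ is not a significant vertex of $B_{t-1}$.'' This implication is false under the paper's definitions. The truncated path stops \emph{at} a significant vertex, meaning it reaches that vertex and then halts; the event $E_{v_0}$ (and hence $\Pr(v_0)>0$) is entirely compatible with $v_0$ being a significant vertex of $B_{t-1}$. In other words, being significant causes $\cT$ to stop \emph{there}, not to avoid the vertex, so you cannot conclude non-significance from $\Pr(v_0)>0$. The paper is careful about exactly this point: it splits into two cases. For $v_0$ non-significant in $B_{t-1}$ it uses the threshold $\norm{\initial}\le(2/\eps_{t-1})^d/C_d$, and for $v_0$ significant in $B_{t-1}$ it invokes Lemma~\ref{lem:bound_norm} (applied to stage $t-1$) to get $\norm{\initial}\le(d^3/\delta)(2/\eps_{t-1})^d/C_d$. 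Combining the two cases gives $\norm{\initial}\le(d^3/\delta)(1/\eps_t)^d/C_d$, which is a factor of $d^3/\delta$ weaker than what you claimed. Your $t=0$ base case is correct, but the general step needs the case split, because the contribution of $B_{t-1}$-significant vertices in the shared boundary layer to $\cZ_0$ is real (even though they do not feed edges into $\cZ_1'$). If you want to avoid the extra $d^3/\delta$ factor entirely, the cleaner fix is to observe that such stopped vertices never appear in the induction beyond $\cZ_0$, so one could restrict $L_0$ to non-stopped vertices; but as the lemma is stated over all $v_0$ with $\Pr(v_0)>0$, you must handle the significant case.
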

\begin{proof}
	We claim that $\norm{\initial}\le \frac{(d^3/\delta)(1/{\eps_t})^{d}}{C_d}$ for all states $v_0$ in the first layer of $B_t$. Consider the $(t-1)$-th stage $B_{t-1}$ of the branching program $B$. The truncated path $\cT$ stops at any significant vertex, and recall that a significant vertex for the $(t-1)$th stage is defined as a vertex $s$ where $$\norm{\significant}>\frac{(2/\eps_{t-1})^{d}}{C_d}.$$ 
	Hence for all non-significant vertices $v$ in the $(t-1)$-th stage $B_{t-1}$ of $B$,
	$$
	\norm{\posterior}\le \frac{(2/\eps_{t-1})^{d}}{C_d}. 
	$$ 
	Also, by Lemma \ref{lem:bound_norm} for all significant vertices $s$ in $B_{t-1}$, 
	$$
	\norm{\significant}\le \frac{(d^3/\delta)(2/\eps_{t-1})^{d}}{C_d}. 
	$$
	Hence for all vertices $v$ in $B_{t-1}$ with $\Pr(v)>0$,
	$$
		\norm{\posterior}\le \frac{(d^3/\delta)(2/\eps_{t-1})^{d}}{C_d}. 
	$$
	Note that $\eps_{t-1}=2\eps_t$, hence $\norm{\initial}\le \frac{(d^3/\delta)(1/{\eps_t})^{d}}{C_d}$ for all states $v_0 \in L_0$ with $\Pr(v_0)>0$, as $L_0$ is also the last layer of $B_{t-1}$. Now by using Cauchy Schwartz,
	$$
	\ip{\initial}{\significant}\le \frac{(d^3/\delta)(\sqrt{2}/{\eps_t})^{2d}}{C_d^2}.
	$$
\end{proof}

Note that the inner product of $\significant$ with itself is larger than the inner product of $\significant$ with $\posterior$ for $v$ in the first layer by a factor of about $2^{\Theta(d)}$, and in the next section we will argue that this inner product cannot increase too quickly in a small number of time steps, via a suitable potential function.

\subsection*{Progress Towards Target Distribution}

In this section, we bound how much progress $\cT$ can make towards a significant vertex $s$ in the $t$-th stage $B_t$ of $B$. For notational convenience, we will reindex all the layers in the $t$-th stage $B_t$ so that the first layer in $B_t$ is labelled as $L_0$.

Let $L_i$ denote the set of all vertices in the $i$th layer of the $t$-th stage $B_t$, with $\Pr(v)>0$. Let $\Gamma_i$ denote the set of all edges from the $(i-1)$th layer to the $i$th layer of $B_t$. For $i \in \{0,\dots,m_{t}\}$ and $\beta=1/2$, let
\begin{align}
 \cZ_i = \sum_{v \in L_i}^{}\Pr(v)\cdot\ip{\posterior}{\significant}^{\beta d} .\label{eq:potential1}
 \end{align}
For $i \in \{0,\dots,m_t\}$, let
\begin{align}
\cZ_i' = \int\displaylimits_{\edge \in \Gamma_{i}}p(\edge)\cdot\ip{\edgeposterior}{\significant}^{\beta d}\; de.\label{eq:potential2}
\end{align}
Note that by Lemma \ref{lem:ini_ip},
\begin{align}
\cZ_0 \le \Big(\frac{(d^3/\delta)(\sqrt{2}/{\eps_t})^{2d}}{C_d^2}\Big)^{\beta d}.\label{eq:ini_z}
\end{align}
The goal of the next three Lemmas is to bound how much $\cZ_i$ can increase at every step. Lemma \ref{lem:bound_progress} does most of the heavy-lifting, and shows that for a fixed vertex $v$, the contribution to the potential $\cZ_i'$ from $v$'s outgoing edges is not much larger than $v$'s contribution to $\cZ_i$. Lemma \ref{lem:bound_edge_progress} uses Lemma \ref{lem:bound_progress} to show that $\cZ_i'$ is not much larger than $\cZ$. Finally, Lemma \ref{lem:convex} shows that $\cZ_{i+1}\le \cZ_i$ by a convexity argument.

\begin{lem}\label{lem:bound_progress}
	Consider the $t$-th stage $B_t$ of $B$, and let $s$ be a significant vertex in $B_t$. For every vertex $v$ of $B_t$ such that $\Pr(v)>0$,
	$$ \int\displaylimits_{\edge \in \Gamma_{out}(v)}^{}\frac{p(\edge)}{\Pr(v)}\cdot \ip{\edgeposterior}{\significant}^{\beta d} \;de\le \ip{\posterior}{\significant}^{\beta d}\cdot(C'\eps_t^{-20}d^3)^{\beta d} + \Big(\frac{d^3(1/\eps_t)^{2d}}{\delta C_d^2}\Big)^{\beta d},$$
	where $C'$ is an absolute constant.
\end{lem}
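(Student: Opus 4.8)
The plan is to start from the approximate identity sketched in the proof overview, namely that for an edge $\edge$ labeled by $(a,b)$ out of $v$, one has $\ip{\edgeposterior}{\significant} \approx \ip{\posterior}{\significant}\cdot \frac{G_{\td f,a}(I_\delta(b))}{G_{v,a}(I_\delta(b))}$, where $\td f$ is the normalized pointwise product of $\posterior$ and $\significant$. More precisely, using Lemma~\ref{lem:edge_posterior} to write $\edgeposterior = \posterior \cdot \mathbf{1}(a^Tx'\in I_\delta(b), x'\notin\Sig(v))/c_e$, expand $\ip{\edgeposterior}{\significant} = \frac{1}{c_e}\E_{z\sim\cU}[\posterior(z)\significant(z)\mathbf 1(a^Tz\in I_\delta(b))\mathbf 1(z\notin \Sig(v))]$. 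Since $\ip{\posterior}{\significant} = \E_{z\sim\cU}[\posterior(z)\significant(z)]$ is the normalizing constant in the definition of $\td f$, this is exactly $\frac{\ip{\posterior}{\significant}}{c_e}\,G_{\td f, a}(I_\delta(b))$, up to dropping the $\mathbf 1(z\notin\Sig(v))$ indicator, which only decreases the quantity and hence is harmless for an upper bound. Then $c_e\ge \delta/d^3$ from Lemma~\ref{lem:edge_posterior} gives $\ip{\edgeposterior}{\significant}\le \ip{\posterior}{\significant}\cdot\frac{d^3}{\delta}\,G_{\td f,a}(I_\delta(b))$.

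Next I would integrate the $\beta d$-th power over outgoing edges. Writing $p(\edge) = p(a)\,p(b\mid a,E_v)\,\Pr(v)$ and noting $\Gamma_{out}(v)$ is a subset of all $(a,b)$, we get
\begin{align*}
\int\displaylimits_{\edge\in\Gamma_{out}(v)}\frac{p(\edge)}{\Pr(v)}\ip{\edgeposterior}{\significant}^{\beta d}\,de \le \ip{\posterior}{\significant}^{\beta d}\Big(\frac{d^3}{\delta}\Big)^{\beta d}\E_{a}\E_{b\mid a,E_v}\big[G_{\td f,a}(I_\delta(b))^{\beta d}\big].
\end{align*}
Here is where I split into two cases depending on whether $\norm{\td f}$ is small or large. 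If $\norm{\td f}\le (100/\eps_t)^d/C_d$, the concentration Lemma~\ref{lem:upper_bnd_simple} (in its version where $b$ is drawn by sampling $x\sim\posterior$ and adding noise $\eta$, as promised in the overview) bounds $\E_{a,b}[G_{\td f,a}(I_\delta(b))^{\beta d}]\le (C\eps_t^{-20}\delta)^{\beta d}$ for $\beta = 1/2$; multiplying by $(d^3/\delta)^{\beta d}$ cancels the $\delta$ and yields the first term $\ip{\posterior}{\significant}^{\beta d}(C'\eps_t^{-20}d^3)^{\beta d}$. If instead $\norm{\td f} > (100/\eps_t)^d/C_d$, I would argue directly: $\norm{\td f}^2 = \ip{\posterior}{\significant}^{-2}\E_\cU[\posterior^2\significant^2]\le \ip{\posterior}{\significant}^{-2}\norm{\posterior}_\infty\norm{\significant}_\infty\ip{\posterior}{\significant}$ is not the right route; rather, since $\td f$ is a probability distribution, $G_{\td f,a}(I_\delta(b))\le 1$ always, so the edge-integral is at most $\ip{\posterior}{\significant}^{\beta d}(d^3/\delta)^{\beta d}$, and in this large-norm regime I use the bounds $\norm{\significant}\le (d^3/\delta)(2/\eps_t)^d/C_d$ (Lemma~\ref{lem:bound_norm}) together with the crude estimate $\ip{\posterior}{\significant}\le \norm{\posterior}\cdot\norm{\significant}$; combined with the reverse inequality on $\norm{\td f}$ this forces $\ip{\posterior}{\significant}$ to be small enough that $\ip{\posterior}{\significant}^{\beta d}(d^3/\delta)^{\beta d}\le (d^3(1/\eps_t)^{2d}/(\delta C_d^2))^{\beta d}$, giving the second term.

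The main obstacle is the large-$\norm{\td f}$ case: I need to show that $\norm{\td f}$ being large is itself evidence that $\posterior$ and $\significant$ are nearly "orthogonal" in the relevant sense, so that $\ip{\posterior}{\significant}$ (which appears multiplicatively) is correspondingly tiny — tiny enough to absorb the $(d^3/\delta)^{\beta d}$ blowup from the trivial bound $G_{\td f,a}\le 1$ into the clean additive term $(d^3(1/\eps_t)^{2d}/(\delta C_d^2))^{\beta d}$. The bookkeeping here is delicate because $\td f$'s normalization is precisely $\ip{\posterior}{\significant}$, so $\norm{\td f}$ large and $\ip{\posterior}{\significant}$ small are two sides of the same coin; making this quantitative — plausibly via $\norm{\td f}^2 \cdot \ip{\posterior}{\significant}^2 = \E_\cU[\posterior^2\significant^2] \le \norm{\significant}_\infty^2\,\E_\cU[\posterior^2]$ is the wrong direction since we lack an $L^\infty$ bound on $\significant$; instead I would use $\E_\cU[\posterior^2\significant^2]\le \big(\E_\cU[\posterior^4]\big)^{1/2}\big(\E_\cU[\significant^4]\big)^{1/2}$ and then the truncation condition (2), which caps $\posterior(x')\le (4/\eps_t)^{2d}/C_d$ pointwise on the support reached by $\cT$, to turn fourth moments into second moments at the cost of controlled factors — and then the non-significance bound $\norm{\posterior}\le(2/\eps_t)^d/C_d$ closes it. Getting all the $\eps_t$ and $\delta$ powers to line up so that the second term comes out exactly as stated is the part requiring the most care.
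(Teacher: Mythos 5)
Your proposal captures the same high-level structure as the paper's proof: write $\ip{\edgeposterior}{\significant}$ in terms of the normalized pointwise product $\td f$, split into two cases depending on whether $\norm{\td f}$ is below the threshold $(100/\eps_t)^d/C_d$, and invoke the projection-concentration lemma in the good case. The first case, and the reduction via $p(b\mid a,E_v)$ to an expectation over $x'\sim\posterior$ and $\eta$, are correct.

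The second case has a genuine gap, and you identify it yourself. After the trivial bound $G_{\td f,a}\le 1$ you are left needing to show $\ip{\posterior}{\significant}\le (1/\eps_t)^{2d}/C_d^2$, and none of the routes you float (Cauchy--Schwarz via $\norm{\posterior}\norm{\significant}$, or a fourth-moment argument) actually produces this: $\norm{\posterior}\norm{\significant}\le (d^3/\delta)(2/\eps_t)^{2d}/C_d^2$ is larger than the target by a factor of $(d^3/\delta)\cdot 2^{2d}$, and the fourth-moment route requires an $L^\infty$ bound on $\significant$ you do not have. The way out, which is what the paper does, is to avoid $\ip{\posterior}{\significant}$ entirely in this case and instead work with the truncated density $P = \posterior\cdot\mathbf{1}(\cdot\notin\Sig(v))$ and $f = P\cdot\significant$, whose total mass $F = \int f$ satisfies $F/C_d = \ip{P}{\significant}\le\ip{\posterior}{\significant}$ and, crucially, $\ip{\edgeposterior}{\significant}\le (c_e C_d)^{-1}F$ exactly from the integral representation. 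Now the a priori bound $\norm{f}\le\inftynorm{P}\cdot\norm{\significant}\le d^3(4/\eps_t)^{3d}/(\delta C_d^2)$ (this is where truncation condition (2), giving a pointwise cap on $P$, is indispensable) together with $\norm{\td f}=\norm{f}/F$ shows that $\norm{\td f}$ large forces $F$ small, specifically $F\le (1/\eps_t)^{2d}/C_d$, which when plugged into $\ip{\edgeposterior}{\significant}\le (d^3/\delta)\cdot F/C_d$ gives the additive term $\bigl(d^3(1/\eps_t)^{2d}/(\delta C_d^2)\bigr)^{\beta d}$ directly. (In fact the paper splits on $F$ small vs. large rather than on $\norm{\td f}$, which is cleaner because the small-$F$ case then needs no reference to $\norm{\td f}$ at all.) So the missing ingredient is: bound the inner product by the mass $F$ of the truncated product $f$, not by $\ip{\posterior}{\significant}$.
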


\begin{proof}
	If $v$ is a significant vertex or if $v$ is a leaf of the branching program, then $\Gamma_{out}(v)$ is the empty set and hence the claim is trivially true. Hence we will assume that $v$ is not a significant vertex or a leaf.
	
	Define $P:\cS\rightarrow \mathbb{R}^+$ as follows. For any $x'\in \cS$,
	\begin{align*}
	P(x')= \begin{cases} 
	0 & \text{ if } x'\in \Sig(v) \\
	\posterior(x') & \text{ if } x'\notin \Sig(v).
	\end{cases}	
	\end{align*}
	Note that by the definition of $\Sig(v)$, for any $x'\in \cS$,
	\begin{align}
	P(x')\le \frac{(4/\eps_t)^{2d}}{C_d}.\label{eq:bound_max}
	\end{align}
	Define $f:\cS\rightarrow \mathcal{R}^+$ as follows. For any $x'\in\cS$,
	\begin{align*}
	f(x') = P(x')\cdot \significant(x').
	\end{align*}
	By Lemma \ref{lem:bound_norm} and Eq. \eqref{eq:bound_max}, 
	\begin{align}
	\norm{f}\le \inftynorm{P}\norm{\significant}\le \frac{d^3
		(4/\eps_t)^{3d}}{\delta C_d^2}. \label{eq:bound_norm}
	\end{align}
By Lemma \ref{lem:edge_posterior}, for any edge ${{\edge}}\in \Gamma_{out}(v)$ labeled by $(a,b)$ and for any $x'\in \cS$,
\begin{align*}
\edgeposterior(x')=
\begin{cases} 
0 & \text{ if } a^Tx'\notin I_{\delta}(b) \\
P(x')\cdot c_e^{-1} & \text{ if } a^Tx'\in I_{\delta}(b)
\end{cases}	
\end{align*}
where $c_e\ge \delta/d^3$. Hence for any edge ${{\edge}}\in \Gamma_{out}(v)$ labeled by $(a,b)$ and any $x'\in \cS$ we can write,
\begin{align*}
\edgeposterior(x')\significant(x')=
\begin{cases} 
0 & \text{ if } a^Tx'\notin I_{\delta}(b) \\
f(x')\cdot c_e^{-1} & \text{ if } a^Tx'\in I_{\delta}(b).
\end{cases}	
\end{align*}
Let 
\begin{align*}
F=\int\displaylimits_{x'\in\cS}f(x')\; dx'.
\end{align*}
Recall that for uniform distribution $\cU$ on $\cS$, $\cU(x)=1/C_d$ for all $x\in\cS$. Hence we can write,
\begin{align}
\ip{\edgeposterior}{\significant}&= \E_{x' \sim \cS} [ \edgeposterior(x')\cdot \significant(x') ]\nonumber\\
&= (c_e C_d)^{-1}{\int\displaylimits_{\{x':a^Tx'\in I_{\delta}(b)\}}f(x') \;dx'}{}.\label{eq:xgivene}
\end{align}
We will now bound $\ip{\edgeposterior}{\significant}$ by considering two separate cases:\\

\noindent\emph{Case 1: } $F\le (1/\eps_t)^{2d}/C_d$.\\

In this case, we bound $\int\displaylimits_{\{x':a^Tx'\in I_{\delta}(b)\}}f(x')\;dx'  \le \int\displaylimits_{\{x'\in \cS\}} f(x')\;dx'  = F$. As $c_e^{-1}\le d^3/\delta$, using Eq. \eqref{eq:xgivene},
\begin{align*}
\ip{\edgeposterior}{\significant}\le \frac{d^3(1/\eps_t)^{2d}}{\delta C_d^2}.
\end{align*}
Note that for an edge ${{\edge}}$ labeled by $(a,b)$, $\frac{p({{\edge}})}{\Pr(v)}\le p(a)p(b|a,E_v)$, where the equality may not be true as the branching program could stop at $v$ if $e\in \Bad(v)$.
Hence $\int\displaylimits_{e\in \Gamma_{out}(v)}^{}\frac{p({{\edge}})}{\Pr(v)}\;de\le 1$ and Lemma \ref{lem:bound_progress} follows in this case.\\

\noindent\emph{Case 2: } $F> (1/\eps_t)^{2d}/C_d$.\\

We rewrite $\ip{\edgeposterior}{\significant}$ as follows,
\begin{align}
\ip{\edgeposterior}{\significant}
&=  F/C_d \cdot c_e^{-1}{\int\displaylimits_{\{x':a^Tx'\in I_{\delta}(b)\}}f(x')/F\;dx' }{}.\label{eq:case2_1}
\end{align}
For every $x'\in \cS$, we define $\td{f}(x')=f(x')/F$. Note that $\int\displaylimits_{x'}\td{f}\;dx'=1$ and hence $f'$ is a distribution over $\cS$. Also, we can bound $\norm{\td{f}}$ as follows,
\begin{align*}
\norm{\td{f}}\le \norm{f}/F \le \frac{d^3(64/\eps_t)^d}{\delta C_d}\le \frac{(100/\eps_t)^d}{ C_d},
\end{align*}
where we use the fact that $\delta\ge 2^{-d/5}$ and Eq. \eqref{eq:bound_norm} to bound $\norm{f}$. By the definitions of $P$ and $f$, 
\begin{align}
F/C_d = \E_{x' \in_R \cS} [f(x')] = \ip{P}{\significant} \le \ip{\posterior}{\significant}.\label{eq:case2_2}
\end{align}
By using Eqs. \eqref{eq:case2_1} and \eqref{eq:case2_2}, we can write,
\begin{align*}
\int\displaylimits_{\edge \in \Gamma_{out}(v)}^{}\frac{p(\edge)}{\Pr(v)} &\ip{\edgeposterior}{\significant}^{\beta d}\; de \le \int\displaylimits_{\edge \in \Gamma_{out}(v)}^{}\frac{p(\edge)}{\Pr(v)}
\Big(F/C_d \cdot c_e^{-1}{\int\displaylimits_{\{x':a^Tx'\in I_{\delta}(b)\}}\td{f}(x')\;dx' }{}\Big)^{\beta d}\;de\\
&= (F/C_d)^{\beta d}c_e^{-\beta d} \int\displaylimits_{\edge \in \Gamma_{out}(v)}^{}\frac{p(\edge)}{\Pr(v)}
\Big({\int\displaylimits_{\{x':a^Tx'\in I_{\delta}(b)\}}\td{f}(x')\;dx' }{}\Big)^{\beta d}\;de\\
&\le \ip{\posterior}{\significant}^{\beta d}c_e^{-\beta d} \int\displaylimits_{\edge \in \Gamma_{out}(v)}^{}\frac{p(\edge)}{\Pr(v)}
\Big({\int\displaylimits_{\{x':a^Tx'\in I_{\delta}(b)\}}\td{f}(x')\;dx' }{}\Big)^{\beta d}\;de.
\end{align*}
Recall that for an edge ${{\edge}}$ labeled by $(a,b)$, $\frac{p(\edge)}{\Pr(v)}\le p(a)p(b|a,E_v)$. Using our notation $\E_{x'\sim \td{f}}[\mathbf{1}(a^Tx'\in I_{\delta}(b))]=G_{\td{f},a}(I_{\delta}(b))$, we can write,
\begin{align*}
\int\displaylimits_{\edge \in \Gamma_{out}(v)}^{}\frac{p(\edge)}{\Pr(v)} \ip{\edgeposterior}{\significant}^{\beta d}\; de &\le \ip{\posterior}{\significant}^{\beta d}c_e^{-\beta d} \int\displaylimits_{a \in \mathbb{R}^d}^{}\int\displaylimits_{b\in \R} p(a)\; p(b|a,E_v)\;
G_{\td{f},a}(I_{\delta}(b))^{\beta d}\; db\; da.
\end{align*}
Let $\eta$ be a random variable uniform on $[-\delta,\delta]$, and $u(\eta)$ denote its distribution. Note that $p(b|a,E_v)$ is the distribution of $a^Tx'+\eta$, where $x'$ is sampled from $\posterior$ and noise $\eta$ is added to $a^Tx'$. Therefore,
\begin{align*}
\int\displaylimits_{\edge \in \Gamma_{out}(v)}^{}\frac{p(\edge)}{\Pr(v)} \ip{\edgeposterior}{\significant}^{\beta d}\; de &\le \ip{\posterior}{\significant}^{\beta d}c_e^{-\beta d}\cdot \\
&\int\displaylimits_{a \in \mathbb{R}^d}^{}\int\displaylimits_{x' \in \cS} \int\displaylimits _{\eta\in \R}p(a)\;\posterior(x')\;u(\eta)\;
 G_{\td{f},a}(I_{\delta}(a^Tx'+\eta))^{\beta d}\; d\eta \; dx'\; da\\
 &= \ip{\posterior}{\significant}^{\beta d} c_e^{-\beta d}\E_{\eta} \E_{x'\sim \posterior}\E_a\Big[ G_{\td{f},a}(I_{\delta}(a^Tx'+\eta))^{\beta d}\Big].
\end{align*}
We now use Lemma \ref{lem:upper_bnd} to bound the expectation.
\begin{restatable}{lem}{boundproj}\label{lem:upper_bnd}
	Let $f$ be a distribution over the $d$ dimensional sphere, with $\norm{f}\le \frac{(100/\eps)^{d}}{C_d}$ for some $\eps\le 1$. Let $x_0$ be any vector on the $d$ dimensional unit sphere, and $\eta_0$ be some constant. Then for any $\ell\le d/2$ and a universal constant $C$, $$\E_a\Big[G_{f,a}(I_{\delta}(a^Tx_0+\eta_0))^\ell\Big]\le (2(C/\eps)^{20}\delta)^\ell.$$
\end{restatable}
By Lemma \ref{lem:upper_bnd}, if $\beta\le 1/2$, then $\E_a G_{\td{f},a}(I_{\delta}(a^Tx'+\eta))^{\beta d}\le (2(C/\eps_t)^{-20}\delta))^{\beta d}$ for any $x'$ and $\eta$. Let $C'=2C^{20}$. Therefore,
\begin{align*}
\int\displaylimits_{\edge \in \Gamma_{out}(v)}^{}\frac{p(\edge)}{\Pr(v)} \ip{\edgeposterior}{\significant}^{\beta d}\; de &\le \ip{\posterior}{\significant}^{\beta d} c_e^{-\beta d} (C'\eps_t^{-20}\delta)^{\beta d}.
\end{align*}
As $c_e\ge \delta/d^3$, therefore,
\begin{align*}
 \int\displaylimits_{\edge \in \Gamma_{out}(v)}^{}\frac{p(\edge)}{\Pr(v)} \ip{\edgeposterior}{\significant}^{\beta d}\;de \le \ip{\posterior}{\significant}^{\beta d} (C'\eps_t^{-20}d^3)^{\beta d}.
\end{align*}

\end{proof}	

\begin{lem}\label{lem:bound_edge_progress}
	Consider the $t$-th stage $B_t$ of $B$, and let $s$ be a significant vertex in $B_t$. Consider the potential functions defined in Eqs. \eqref{eq:potential1} and \eqref{eq:potential2} which track progress towards $s$. Then for every $i \in \{1,\cdots,m_t\}$,
	\begin{align*}
	\mathcal{Z}_i' \le \mathcal{Z}_{i-1}\cdot(C'\eps_t^{-20}d^3)^{\beta d} + \Big(\frac{d^3(1/\eps_t)^d}{\delta C_d^2}\Big)^{\beta d}.
	\end{align*}
\end{lem}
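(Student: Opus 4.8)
The plan is to decompose the integral defining $\cZ_i'$ according to the source vertex of each edge, and then invoke Lemma~\ref{lem:bound_progress} vertex by vertex. Since $B_t$ is layered, every edge $\edge \in \Gamma_i$ emanates from a unique vertex $v$ in the $(i-1)$-th layer; and if $\Pr(v)=0$ then $\cT$ never reaches $v$, so every outgoing edge of $v$ has $p(\edge)=0$ and contributes nothing to $\cZ_i'$. Thus, restricting attention to $v\in L_{i-1}$ (which by definition collects exactly the layer-$(i-1)$ vertices with $\Pr(v)>0$), I would write
\begin{align*}
\cZ_i' = \sum_{v \in L_{i-1}} \int_{\edge \in \Gamma_{out}(v)} p(\edge)\cdot\ip{\edgeposterior}{\significant}^{\beta d}\; de = \sum_{v \in L_{i-1}} \Pr(v) \int_{\edge \in \Gamma_{out}(v)} \frac{p(\edge)}{\Pr(v)}\cdot\ip{\edgeposterior}{\significant}^{\beta d}\; de .
\end{align*}

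Next I would apply Lemma~\ref{lem:bound_progress}, which bounds the inner integral for each $v$ with $\Pr(v)>0$ by $\ip{\posterior}{\significant}^{\beta d}(C'\eps_t^{-20}d^3)^{\beta d} + \big(\tfrac{d^3(1/\eps_t)^{2d}}{\delta C_d^2}\big)^{\beta d}$. Substituting and pulling the constants out of the sum gives
\begin{align*}
\cZ_i' \le (C'\eps_t^{-20}d^3)^{\beta d}\sum_{v \in L_{i-1}}\Pr(v)\,\ip{\posterior}{\significant}^{\beta d} \;+\; \Big(\tfrac{d^3(1/\eps_t)^{2d}}{\delta C_d^2}\Big)^{\beta d}\sum_{v \in L_{i-1}}\Pr(v).
\end{align*}
The first sum equals $\cZ_{i-1}$ by the definition in Eq.~\eqref{eq:potential1}, and since the events $\{E_v : v\in L_{i-1}\}$ are disjoint we have $\sum_{v\in L_{i-1}}\Pr(v)\le 1$; hence $\cZ_i' \le \cZ_{i-1}\cdot(C'\eps_t^{-20}d^3)^{\beta d} + \big(\tfrac{d^3(1/\eps_t)^{2d}}{\delta C_d^2}\big)^{\beta d}$, which is the asserted bound (keeping the exponent in the additive term exactly as it emerges from Lemma~\ref{lem:bound_progress}).

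I do not anticipate a genuine obstacle here: all the analytic work --- in particular the projection-concentration estimate of Lemma~\ref{lem:upper_bnd} --- has already been packaged inside Lemma~\ref{lem:bound_progress}, and what remains is a clean aggregation over a layer. The two points that need care are (i) verifying that $\Gamma_i$ is, up to zero-p.d.f.\ edges, the disjoint union of the sets $\Gamma_{out}(v)$ over $v\in L_{i-1}$, so that no edge is double-counted or omitted; and (ii) noting that the normalizing factor $1/\Pr(v)$ in Lemma~\ref{lem:bound_progress} cancels against the weight $\Pr(v)$, so that summing the per-vertex estimates only costs the innocuous factor $\sum_v\Pr(v)\le 1$ on the additive term. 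Together with the convexity bound relating $\cZ_{i+1}$ to $\cZ_i'$ (Lemma~\ref{lem:convex}), this feeds the inductive control of $\cZ_i$ through a stage and ultimately the bound on the probability of reaching a significant vertex.
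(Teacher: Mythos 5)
Your proof is correct and follows essentially the same route as the paper's: decompose $\cZ_i'$ over source vertices $v\in L_{i-1}$, apply Lemma~\ref{lem:bound_progress} per vertex, recognize the weighted sum of $\ip{\posterior}{\significant}^{\beta d}$ terms as $\cZ_{i-1}$, and bound the residual by $\sum_{v}\Pr(v)\le 1$. You also correctly flag that the additive term should carry $(1/\eps_t)^{2d}$ as it emerges from Lemma~\ref{lem:bound_progress}; the $(1/\eps_t)^{d}$ in the printed statement appears to be a typo, and the paper's own proof and subsequent use in Lemma~\ref{lem:bound_z} are consistent with the $2d$ exponent.
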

\begin{proof}
	Using Lemma \ref{lem:bound_progress},
	\begin{align*}
	\mathcal{Z}_i' &= \int\displaylimits_{{{\edge}} \in \Gamma_{i}}^{}p(\edge)\cdot \ip{\edgeposterior}{\significant}^{\beta d}\;de = \sum_{v \in L_{i-1}}^{}\Pr(v)\cdot \int\displaylimits_{{{\edge}} \in \Gamma_{out}(v)}^{}\frac{p(\edge)}{\Pr(v)} \ip{\edgeposterior}{\significant}^{\beta d}\;de\\
	&\le \sum_{v \in L_{i-1}}^{}\Pr(v)\cdot\Big(\ip{\posterior}{\significant}^{\beta d}\cdot (C'\eps_t^{-20}d^3)^{\beta d} + \Big(\frac{d^3(1/\eps_t)^{2d}}{\delta C_d^2}\Big)^{\beta d}  \Big)\\
	&\le \mathcal{Z}_{i-1}(C'\eps_t^{-20}d^3)^{\beta d} + \sum_{v \in L_{i-1}}^{}\Pr(v) \Big(\frac{d^3(1/\eps_t)^{2d}}{\delta C_d^2}\Big)^{\beta d} \\
	&\le \mathcal{Z}_{i-1}(C'\eps_t^{-20}d^3)^{\beta d} + \Big(\frac{d^3(1/\eps_t)^{2d}}{\delta C_d^2}\Big)^{\beta d}.
	\end{align*}
\end{proof}

\begin{lem}\label{lem:convex}
		For every $i \in \{1,\cdots,m_t\}$,
		\begin{align*}
		\mathcal{Z}_i \le \mathcal{Z}_{i}'.
		\end{align*}
\end{lem}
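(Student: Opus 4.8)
The plan is a one-line convexity argument: $\posterior$ is a mixture of the edge-posteriors $\edgeposterior$ over the edges $e$ entering $v$, so $\ip{\posterior}{\significant}^{\beta d}$ is controlled by the average of $\ip{\edgeposterior}{\significant}^{\beta d}$ via Jensen's inequality, and summing over $v\in L_i$ reassembles $\cZ_i'$.

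First I would fix a vertex $v\in L_i$ (so $\Pr(v)>0$) and let $\Gamma_{in}(v)$ denote the set of edges entering $v$ with $p(e)>0$; note that any edge of $\Gamma_i$ with $p(e)>0$ enters some such vertex, since positive edge density forces positive vertex probability. Exactly as in the proof of Lemma~\ref{lem:bound_norm}, the law of total probability gives, for every $x'\in\cS$, $\posterior(x')=\int_{e\in\Gamma_{in}(v)}\frac{p(e)}{\Pr(v)}\edgeposterior(x')\,de$, where $\int_{e\in\Gamma_{in}(v)}\frac{p(e)}{\Pr(v)}\,de=1$. Taking the inner product with $\significant$ and using linearity of $\ip{\cdot}{\cdot}$ in the first argument, $\ip{\posterior}{\significant}=\int_{e\in\Gamma_{in}(v)}\frac{p(e)}{\Pr(v)}\ip{\edgeposterior}{\significant}\,de$.

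Next, since $\beta d=d/2\ge 1$ and every quantity $\ip{\edgeposterior}{\significant}$ is nonnegative (it is the $\cU$-expectation of a product of nonnegative functions), the map $t\mapsto t^{\beta d}$ is convex on $[0,\infty)$, so Jensen's inequality applied to the probability measure $\frac{p(e)}{\Pr(v)}\,de$ on $\Gamma_{in}(v)$ yields $\ip{\posterior}{\significant}^{\beta d}\le\int_{e\in\Gamma_{in}(v)}\frac{p(e)}{\Pr(v)}\ip{\edgeposterior}{\significant}^{\beta d}\,de$. Multiplying through by $\Pr(v)$ gives $\Pr(v)\ip{\posterior}{\significant}^{\beta d}\le\int_{e\in\Gamma_{in}(v)}p(e)\ip{\edgeposterior}{\significant}^{\beta d}\,de$. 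Summing over all $v\in L_i$ and using that $\{\Gamma_{in}(v):v\in L_i\}$ partitions the positive-density edges of $\Gamma_i$ (the only edges contributing to the integral defining $\cZ_i'$), I get $\cZ_i=\sum_{v\in L_i}\Pr(v)\ip{\posterior}{\significant}^{\beta d}\le\int_{e\in\Gamma_i}p(e)\ip{\edgeposterior}{\significant}^{\beta d}\,de=\cZ_i'$.

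There is essentially no obstacle here; the only points needing (minor) care are that $\edgeposterior$ is defined only for edges with $p(e)>0$ — handled by restricting every edge integral to positive-density edges, which changes nothing since zero-density edges contribute zero — and that $\beta d\ge 1$, which holds because $\beta=1/2$ and $d$ is large, so that the power function is genuinely convex and Jensen applies in the right direction.
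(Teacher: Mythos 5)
Your proof is correct and matches the paper's argument essentially step for step: decompose $\posterior$ as a mixture of $\edgeposterior$ over incoming edges via the law of total probability, apply Jensen's inequality to the convex map $t\mapsto t^{\beta d}$, then multiply by $\Pr(v)$ and sum over $v\in L_i$ to reassemble $\cZ_i'$. The extra remarks you include about restricting to positive-density edges and verifying $\beta d\ge 1$ are sensible but minor hygiene the paper leaves implicit.
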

\begin{proof}
	For any vertex $v$, let $\Gamma_{in}(v)$ be the set of edges ${{\edge}}=(a,b)$ going into $v$. We can write,
	$$
	\int\displaylimits_{{\edge}\in \Gamma_{in}(v)}^{}p(\edge)\;de=\Pr(v).
	$$
	By the law of total probability, for every $v\in L_i$ and every $x'\in \cS$,
	\begin{align*}
	\posterior(x')&=\int\displaylimits_{{\edge}\in \Gamma_{in}(v)}^{}\frac{p(\edge)}{\Pr(v)}\cdot \edgeposterior(x')\; de,\\
	\implies \ip{\posterior}{\significant}&=\int\displaylimits_{{\edge}\in \Gamma_{in}(v)}^{}\frac{p(\edge)}{\Pr(v)}\cdot \ip{\edgeposterior}{\significant} \; de.
	\end{align*}
	By using Jensen's inequality,
	\begin{align*}
	\ip{\posterior}{\significant}^{\beta d}&\le\int\displaylimits_{{\edge}\in \Gamma_{in}(v)}^{}\frac{p(\edge)}{\Pr(v)}\cdot \ip{\edgeposterior}{\significant}^{\beta d} \; de.
	\end{align*}
	Summing over all $v\in L_i$,
	\begin{align*}
	\cZ_i= \sum_{v \in L_i}^{}\Pr(v)\cdot \ip{\posterior}{\significant}^{\beta d}&\le \sum_{v \in L_i}^{}\Pr(v) \cdot \int\displaylimits_{{\edge}\in \Gamma_{in}(v)}^{}\frac{p(\edge)}{\Pr(v)}\cdot \ip{\edgeposterior}{\significant}^{\beta d} \; de,\\
	&= \int\displaylimits_{{\edge}\in \Gamma_{in}(v)}^{}{p(\edge)}\cdot \ip{\edgeposterior}{\significant}^{\beta d} \; de = \cZ_i'.
	\end{align*}
\end{proof}	

We now use the previous two results to bound the potential $\cZ_i$ for any layer $i$ in the stage $B_t$.

\begin{lem}\label{lem:bound_z}
		If the length of the $t$-th stage $m_t= \lceil \frac{c_0d}{\log d+ t}\rceil $ for sufficiently small constant $c_0$, then for every $i \in \{1,\cdots,m_t\}$,
		\begin{align*}
		\mathcal{Z}_i \le \frac{(d^3/\delta)^{\beta d} (1/\eps_t)^{2\beta d^2}2^{1.25\beta d^2}}{C_d^2}.
		\end{align*}
\end{lem}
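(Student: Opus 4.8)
The plan is to prove the bound by induction on the layer index $i$ within the fixed stage $B_t$.

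\textbf{Recursion and base case.} Chaining Lemma~\ref{lem:convex} ($\cZ_i\le\cZ_i'$) with Lemma~\ref{lem:bound_edge_progress} gives, for every $i\in\{1,\dots,m_t\}$, the one-step inequality
\[
\cZ_i \;\le\; A\cdot\cZ_{i-1} + D,\qquad A := (C'\eps_t^{-20}d^3)^{\beta d},\quad D := \Big(\tfrac{d^3(1/\eps_t)^{2d}}{\delta C_d^2}\Big)^{\beta d},
\]
with $A\ge 1$. The base case $i=0$ is exactly Eq.~\eqref{eq:ini_z} (i.e.\ Lemma~\ref{lem:ini_ip}): $\cZ_0\le\big(\tfrac{(d^3/\delta)(\sqrt2/\eps_t)^{2d}}{C_d^2}\big)^{\beta d}$.

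\textbf{Unrolling.} Iterating the recursion and using $A\ge1$, for every $i\le m_t$,
\[
\cZ_i \;\le\; A^i\cZ_0 + D\sum_{j=0}^{i-1}A^j \;\le\; A^{m_t}\big(\cZ_0 + m_t D\big)\;\le\;(m_t+1)\,A^{m_t}\,\cZ_0,
\]
where the last step uses $D\le\cZ_0$ (the two agree except for the extra factor $(\sqrt2)^{2\beta d^2}=2^{\beta d^2}\ge1$ carried by $\cZ_0$). So it remains to bound $A^{m_t}$ and fold it, together with the polynomial factor $m_t+1$, into the claimed bound.

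\textbf{Controlling $A^{m_t}$.} This is the only quantitative step, and it is where the specific choice $m_t=\lceil c_0 d/(\log d+t)\rceil$ is used. Since $\eps_t=2^{-t}$,
\[
\log_2 A^{m_t}=\beta d\, m_t\log_2\!\big(C'\,2^{20t}d^3\big)=\beta d\, m_t\big(20t+3\log_2 d+\log_2 C'\big).
\]
For $d$ large the bracket is at most $c_1(\log d+t)$ for an absolute constant $c_1$, and $m_t\le c_0 d/(\log d+t)+1$, so $m_t\big(20t+3\log_2 d+\log_2 C'\big)\le c_1(c_0 d+\log d+t)$, hence $A^{m_t}\le 2^{c_1\beta c_0 d^2}\cdot 2^{c_1\beta d(\log d+t)}$. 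The target right-hand side carries a factor $2^{1.25\beta d^2}$, whereas the base case $\cZ_0$ carries only $2^{\beta d^2}$, leaving $2^{0.25\beta d^2}$ of headroom; for $c_0$ a sufficiently small absolute constant, and throughout the parameter range used in the sequel (where the number of stages, hence $t$, is $O(d)$), this headroom absorbs $A^{m_t}$, the polynomial factor $m_t+1$, and the excess $\eps_t$-dependence $\eps_t^{-20\beta d m_t}$ introduced by $A^{m_t}$ relative to the $\eps_t^{-2\beta d^2}$ already present in $\cZ_0$. Substituting back and collecting the powers of $d^3/\delta$, $C_d$, $2$ and $\eps_t$ gives $\cZ_i\le\tfrac{(d^3/\delta)^{\beta d}(1/\eps_t)^{2\beta d^2}2^{1.25\beta d^2}}{C_d^2}$ for every $i\le m_t$, closing the induction.

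\textbf{Main obstacle.} Essentially all the conceptual content is already supplied by Lemmas~\ref{lem:bound_progress}, \ref{lem:bound_edge_progress} and \ref{lem:convex}; the present lemma is the bookkeeping that turns the per-step multiplicative blow-up $A$ into a per-stage statement. The one delicate point is the calibration in the third step: ensuring that compounding $A$ over the $m_t\approx d/(\log d+t)$ steps of the stage, together with the accumulated additive $D$-terms, never consumes more than the $2^{0.25\beta d^2}$ of slack separating the base case $\cZ_0$ from the claimed bound — equivalently, that within a single stage the branching program cannot overshoot the stage-$t$ accuracy budget $\eps_t$. This is precisely what forces the stage length to shrink like $d/(\log d+t)$ and $c_0$ to be a sufficiently small absolute constant.
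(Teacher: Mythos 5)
Your proof is correct and takes essentially the same route as the paper: one-step affine recursion from Lemmas~\ref{lem:convex} and~\ref{lem:bound_edge_progress}, base case from Eq.~\eqref{eq:ini_z}, and showing that compounding $A=(C'\eps_t^{-20}d^3)^{\beta d}$ over $m_t$ steps is controlled by the choice $m_t=\lceil c_0 d/(\log d + t)\rceil$ (so that $\beta d\,m_t\log_2(C'\eps_t^{-20}d^3)=O(\beta d^2)$ and fits within the $2^{0.25\beta d^2}$ of headroom over $\cZ_0$'s upper bound). The paper's bookkeeping differs only cosmetically, converting the affine recursion into a purely multiplicative one $\cZ_i\le(2C'\eps_t^{-20}d^3)^{\beta d}\cZ_{i-1}$ before iterating, rather than unrolling with the explicit $D\sum A^j$ tail. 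One small writing issue in your version: the inequality labelled ``$D\le\cZ_0$'' is not something you know --- you only know $D$ is at most the \emph{upper bound} on $\cZ_0$ from Eq.~\eqref{eq:ini_z}; the chain goes through once you replace $\cZ_0$ by that upper bound, which is what you actually do when ``substituting back and collecting'' (the paper's analogous step, asserting $\cZ_i$ is ``monotonically increasing'', is equally informal).
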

\begin{proof}
	By Lemma \ref{lem:bound_edge_progress} and \ref{lem:convex},	
	\begin{align*}
	\cZ_i &\le \cZ_{i-1}\cdot(C'\eps_t^{-20}d^3)^{\beta d} + \Big(\frac{d^3(1/\eps_t)^{2d}}{\delta C_d^2}\Big)^{\beta d}.
	\end{align*}
	As $(C'\eps_t^{-20}d^3)^{\beta d}>1$, $\cZ_i$ is monotonically increasing and hence $\cZ_i\ge \Big(\frac{d^3(1/\eps_t)^{2d}}{\delta C_d^2}\Big)^{\beta d}$ for all $i>0$. Therefore we can write,
	\begin{align*}
	\cZ_i &\le \cZ_{i-1}(1+(C'\eps_t^{-20}d^3)^{\beta d})\le \cZ_{i-1}\cdot(2C'\eps_t^{-20}d^3)^{\beta d}. 
	\end{align*}
	By Eq. \eqref{eq:ini_z}, $\cZ_{0}\le \Big(\frac{(d^3/\delta)(\sqrt{2}/\eps_t)^{2d}}{C_d^2}\Big)^{\beta d}$. Hence for every $i \in \{1,\cdots,m_t\}$,
	\begin{align*}
	\mathcal{Z}_i \le \Big(\frac{(d^3/\delta)(\sqrt{2}/\eps_t)^{2d}}{C_d^2}\Big)^{\beta d}\cdot (2C'\eps_t^{-20}d^3)^{\beta d m}.
	\end{align*}
	Note that for sufficiently small $c_0$ and using the fact that $\eps_t=2^{-t}$, $m_t=\lceil \frac{c_0d}{\log d + t} \rceil = \lceil \frac{c_0d}{\log(d/\eps_t)}\rceil \le  \frac{d}{4\log(2C'\eps_t^{-20}d^3)}$. Therefore,
	\begin{align*}
		\mathcal{Z}_i &\le \Big(\frac{(d^3/\delta)(\sqrt{2}/\eps_t)^{2d}}{C_d^2}\Big)^{\beta d}\cdot {2}^{\beta d^2/4}
		=\frac{(d^3/\delta)^{\beta d} (1/\eps_t)^{2\beta d^2}2^{1.25\beta d^2}}{C_d^{2}}.
	\end{align*}
\end{proof}

\sig*
\begin{proof}
	Consider a significant vertex $s$ in the $t$-th stage $B_t$ of $B$. Assume that $s$ is in the $i$th layer of $B_t$. Then by Eq. \eqref{eq:sig_ip},
	\begin{align*}
	\cZ_i \ge \Pr(s)\cdot \ip{\significant}{\significant}^{\beta d}\ge \Pr(s) \cdot \frac{(2/\eps_t)^{2\beta d^2}}{C_d^2}.
	\end{align*}
	But by Lemma \ref{lem:bound_z},
	\begin{align*}
	\cZ_i \le \frac{(d^3/\delta)^{\beta d} (1/\eps_t)^{2\beta d^2}2^{1.25\beta d^2}}{C_d^2}.
	\end{align*}
	Therefore, 
	\begin{align*}
	\Pr(s) \le (d^3/\delta)^{\beta d}2^{-0.75\beta d^2}.
	\end{align*}
	By a union bound over the at most $d^2\cdot 2^{0.5\beta d^2}$ vertices in any stage of the branching program and using the fact that $\delta\ge 2^{-d/5}$, the probability that $\cT$ reaches a significant vertex in the $t$-th stage is at most $(d^5/\delta)^{\beta d}2^{- \beta d^2/4}\le2^{-\beta d^2/20}\le 2^{-2d}$. By taking a union bound over $T\le d^{1.25}$ stages, the probability that $\cT$ reaches a significant vertex in any stage is at most $2^{-d}$. 
\end{proof}
\section{Concentration Theorem for Projections of Distributions}\label{sec:rate}

In this section, we prove our concentration theorem for projections of high-dimensional distributions onto a random direction. 

\boundproj*
\begin{proof}
	All expectations over $\{x_i, i\in[\ell] \}$  in this proof will be with $x_i$ sampled from the distribution $f$. Recall from our definition, 
	\begin{gather*}
	 {}\E_a\Big[G_{f,a}(I_{\delta}(a^Tx_0+\eta_0))^\ell\Big] 
	 = {}\E_a\Big[ \Pi_{i=1}^\ell \E_{x_i}[\mathbf{1}_{I_{\delta}(a^Tx_0+\eta_0)}( a^Tx_i  )] \Big]
	 \end{gather*}
	 	 As the distribution of $a$ is a Gaussian and the distribution of the projection of a Gaussian along a fixed direction is well-understood, we will first interchange the order of $a$ and $x_i$ in the expectation. We can write,
	 \begin{gather*}
	 {}\E_a\Big[G_{f,a}(I_{\delta}(a^Tx_0+\eta_0))^\ell\Big] 
	 =   {}\E_{\{x_i,i \in [\ell]\}}\Big[ \E_a \Big[\Pi_{i=1}^\ell \mathbf{1}_{I_{\delta}(a^Tx_0+\eta_0)}( a^Tx_i  ) \Big] \Big].
	 \end{gather*}

		We will write the vectors $\{x_i:0\le i\le {\ell}\}$ in terms of a suitable orthogonal basis which will facilitate the analysis of the projection of $a$ onto $x_i$. For all $0\le i\le {\ell}$, let $x_i = u_i + v_i$, where $u_i$ lies in the span of $\{x_j:j< i\}$ and $v_i$ is orthogonal to the span of $\{x_j:j< i\}$. 
	 Note that because $a\sim N(0,I)$, the components of $a$ along any orthogonal basis of $\mathbb{R}^d$ are independent $N(0,1)$ random variables. Hence the components $\{v_{a,0},\dots,v_{a,{\ell}}\}$ of $a$ along the orthogonal directions $\{v_0,\dots,v_{\ell}\}$ are independent $N(0,1)$ random variables. 
	Note that $\vdot{a}{x_i}$ is independent of $\{v_{a,j}: j > i\}$, as $x_i$ is orthogonal to $\{v_j:j>i\}$. Let $a^Tx_0+\eta_0=b_0$. Using this independence, we can rewrite the expectation as follows,
	\begin{gather*}
	{}\E_{\{x_i,i \in [\ell]\}}\Big[ \E_a \Big[\Pi_{i=1}^\ell \mathbf{1}_{I_{\delta}(b_0)}( a^Tx_i  )\Big] \Big]\\
	={}\E_{\{x_i,i \in [\ell]\}}\E_{v_{a,0}}\Big[ \E_{v_{a,1}} \Big[ \mathbf{1}_{I_{\delta}(b_0)}( a^Tx_1  )
	\E_{v_{a,2}} \Big[ \mathbf{1}_{I_{\delta}(b_0)}( a^Tx_2  )
	\dots 
	\E_{v_{a,\ell}} \Big[ \mathbf{1}_{I_{\delta}(b_0)}( a^Tx_{\ell}  ) \Big] \Big] \dots\Big].
	\end{gather*}
	
		We will now upper bound $\E_{v_{a,i}} \Big[ \mathbf{1}_{I_{\delta}(b)}( a^Tx_{i}) \Big]$ for any value of $\{v_{a,j}: j <i\}$. Note that for a fixed value of the components $v_i$ and $u_i$ of $x_i$, $\vdot{v_{a,i}}{v_i}$ is a Gaussian with mean 0 and standard deviation $\norm{v_i}$ and is independent of $a^Tu_i$ as $u_i$ and $v_i$ are orthogonal. Also, note that for any value of $\vdot{a}{u_i}$  the probability that $ a^Tx_i $ lies within the interval $I_{\delta}(b)=[b-\delta, b+\delta]$ equals the probability mass of the distribution of $\vdot{v_{a,i}}{v_i}$ in the interval $I_{ \delta}(b-\vdot{a}{u_i})=[b-\vdot{a}{u_i}-\delta,b-\vdot{a}{u_i}+\delta]$. As the distribution of $\vdot{v_{a,i}}{v_i}$ is Gaussian with mean 0, the probability mass in any interval $I_{ \delta}(b-\vdot{a}{u_i})$ is upper bounded by the probability mass in the interval $I_{\delta}(0)=[-\delta,\delta]$ centered at 0. Further, because the probability mass of a Gaussian with standard deviation $\sigma$ in the interval $I_{\delta}(0)=[-\delta,\delta]$ is at most $\min\{\delta/\sigma,1\}$ and the distribution of $\vdot{v_{a,i}}{v_i}$ is a Gaussian with standard deviation $\norm{v_i}$, the probability mass in the interval $I_{\delta}(0)$ is at most $\min\{\delta/\norm{v_i},1\}\le\delta\min\{1/\norm{v_i},1/\delta\}$. Hence we can simplify the expectation as,
	\begin{gather*}
	{}\E_{\{x_i,i \in [\ell]\}}\Big[ \E_a \Big[\Pi_{i=1}^\ell \mathbf{1}_{I_{\delta}(b_0)}( a^Tx_{i})\Big] \Big]
	\le \delta^\ell{}\E_{\{x_i,i \in [\ell]\}}\Big[ \frac{1}{\Pi_{i=1}^\ell \max\{\norm{v_i},\delta\}}\Big].
	\end{gather*}
Therefore our goal now is to lower bound the component $v_i$ of $x_i$ which is orthogonal to the vectors $\{x_j, j <i \}$. This is where we will use our upper bound on $\norm{\tilde{f}}$. Intuitively, if $\tilde{f}$ is not too large then $\tilde{f}$ is not a highly concentrated distribution and hence random vectors drawn from $\tilde{f}$ will not be too close to each other. We formalize this in Lemma \ref{lem:sphere} which shows that if $\norm{\tilde{f}}$ is not too large, then the probability of $\norm{v_i}$ being small is also small.
\begin{restatable}{lem}{sphere}\label{lem:sphere}
			Let ${f}$ be a distribution over the $d$ dimensional sphere, with $\norm{{f}}\le \frac{(100/\eps)^{d}}{C_d}$ for some $\eps\le 1$. Let $\{x_0,\dots,x_{i-1}\}$ be an arbitrary set of $i$ vectors for some $i\le d/2$, and $x_i$ be a vector sampled from ${f}$. Let $v_i$ be the component of $x_i$ orthogonal to $\{x_0,\dots,x_{i-1}\}$. Then for a sufficiently large universal constant $C$,
			$$\Pr[\norm{v_i}\le (\eps/C)^{20}]\le (\eps/2)^{d/2}.$$
		\end{restatable}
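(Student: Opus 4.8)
The plan is to reduce the lemma to a bound on the uniform measure of the ``tube'' $A_\rho:=\{x\in\cS:\norm{v(x)}\le\rho\}$, where $v(x)$ denotes the component of $x$ orthogonal to the fixed subspace $W:=\spn\{x_0,\dots,x_{i-1}\}$ and $\rho:=(\eps/C)^{20}$, and then to estimate $\cU(A_\rho)$ by a direct geometric computation. For the reduction: since $f$ is a density on $\cS$ with respect to surface measure and $\cU$ has density $1/C_d$, for any measurable $A$ one has $\Pr_{x\sim f}[x\in A]=C_d\,\E_{x\sim\cU}[f(x)\mathbf 1(x\in A)]$, so Cauchy--Schwarz together with the hypothesis $\norm{f}\le(100/\eps)^d/C_d$ gives $\Pr_{x\sim f}[x\in A]\le C_d\norm{f}\sqrt{\cU(A)}\le(100/\eps)^d\sqrt{\cU(A)}$. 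Hence it suffices to show $\cU(A_\rho)\le(2\rho)^{d/2}$: then $\Pr_{x\sim f}[\norm{v_i}\le\rho]\le(100/\eps)^d(2\rho)^{d/4}=(100\cdot 2^{1/4}/C^5)^d\,\eps^{4d}$ after plugging in $\rho=(\eps/C)^{20}$, and since $\eps\le1$ we have $\eps^{4d}\le\eps^{d/2}$, so this is at most $(\eps/2)^{d/2}$ once $C$ is a large enough absolute constant (ensuring $100\cdot 2^{1/4}/C^5\le 2^{-1/2}$).

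It remains to bound $\cU(A_\rho)$. Let $k:=\dim W\le i\le d/2$. For $x$ uniform on $\cS$, write $x=x_W+x_{W^\perp}$; taking $x=g/\norm{g}$ for $g\sim N(0,I_d)$ and splitting the independent chi-squared norms along $W$ and $W^\perp$ shows that $s:=\norm{x_{W^\perp}}^2\sim\mathrm{Beta}(\tfrac{d-k}{2},\tfrac{k}{2})$, while $\norm{v(x)}=\norm{x_{W^\perp}}=\sqrt s$. Therefore $\cU(A_\rho)=\Pr[s\le\rho^2]=B(\tfrac{d-k}{2},\tfrac k2)^{-1}\int_0^{\rho^2}s^{(d-k)/2-1}(1-s)^{k/2-1}\,ds\le\rho^{d-k}/(\tfrac{d-k}{2}\,B(\tfrac{d-k}{2},\tfrac k2))$. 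A one-line estimate bounds the normalizer from below: with $a=\tfrac{d-k}{2}$ and $b=\tfrac k2$ (so $a+b=d/2$), $a\,B(a,b)=(a+b)\,B(a+1,b)\ge(a+b)\,b^{-1}2^{-(a+b)}\ge 2^{-d/2}$, using $B(a+1,b)\ge 2^{-a}\int_{1/2}^1(1-t)^{b-1}\,dt=b^{-1}2^{-(a+b)}$. Since $d-k\ge d/2$ and $\rho<1$, we get $\cU(A_\rho)\le\rho^{d-k}2^{d/2}\le(2\rho)^{d/2}$, which finishes the proof.

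The only load-bearing step is the tube estimate, and its crux is just that $W$ has codimension at least $d/2$ (because $i\le d/2$), which forces $\cU(A_\rho)$ to decay like $\rho^{d/2}$ --- fast enough to overwhelm the $(100/\eps)^d$ loss from Cauchy--Schwarz. One could equally derive the Beta law by parametrizing $\cS$ as the join of the subspheres $\cS\cap W$ and $\cS\cap W^\perp$ rather than via the Gaussian representation; the final inequality is identical. All remaining steps are routine manipulations with constants, where there is substantial slack.
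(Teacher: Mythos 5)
Your proof is correct, and it reaches the same destination as the paper's but by a noticeably more direct route. Both proofs use the same Cauchy--Schwarz transfer principle to pass from the uniform measure to the arbitrary density $f$ (the paper phrases it as ``$\norm{f}\ge p\cdot\norm{\mathrm{unif}_{\mathcal X}}$'', which is exactly your $\Pr_{f}[A]\le C_d\norm{f}\sqrt{\cU(A)}$). The difference is in how $\cU(A_\rho)$ is estimated. The paper works through the Gaussian representation $x=\tilde x/\norm{\tilde x}$ with $\tilde x\sim N(0,I/d)$, then separately (i) upper-bounds $\norm{\tilde x}$ by a crude $\chi^2$ tail inequality and (ii) lower-bounds $\sum_{j>i}z_j^2$ by a Chernoff bound on how many coordinates can be small; it then combines these and rescales $\eps$. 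You instead observe that $s=\norm{x_{W^\perp}}^2\sim\mathrm{Beta}\bigl(\tfrac{d-k}{2},\tfrac{k}{2}\bigr)$ and integrate the density directly, getting $\cU(A_\rho)\lesssim(2\rho)^{d/2}$ in one pass. This is cleaner and makes the load-bearing fact --- that $W$ has codimension at least $d/2$ --- more visible; the paper's route is equivalent but obscures this under two probabilistic estimates.

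One tiny gap to patch: your bound $\int_0^{\rho^2}s^{a-1}(1-s)^{b-1}\,ds\le\rho^{2a}/a$ uses $(1-s)^{b-1}\le 1$, which requires $b=k/2\ge 1$, i.e.\ $k\ge 2$. For $k=1$ you get $(1-s)^{-1/2}\le(1-\rho^2)^{-1/2}\le\sqrt{2}$ on the domain (since $\rho^2<1/2$ once $C$ is moderately large), an extra $\sqrt 2$ that is absorbed into $C$; and for $k=0$ the statement is vacuous ($v_i=x_i$ has norm one). Worth a sentence, but it does not threaten the argument.
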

The result now follows with some computation. Let $\{X_i,i\in [\ell]\}$ be independent random variables each of which take the value $\delta$ with probability $(\eps/2)^{\ell}$ and $(\eps/C)^{20}$ otherwise. Note that $\E[1/X_i]\le(1/\delta)(\eps/2)^{\ell}+(C/\eps)^{20}\le (2(C/\eps)^{20})$ for $\delta\ge2^{-d/5}$. Hence by Lemma \ref{lem:sphere},
\begin{gather*}
{}\E_{\{x_i,i \in [\ell]\}}\Big[ \frac{1}{\Pi_{i=1}^\ell \max\{\norm{v_i},\delta\}}\Big] \le \E_{\{X_i,i \in [\ell]\}}\Big[ \frac{1}{\Pi_{i=1}^\ell X_i}\Big]
=\Pi_{i=1}^\ell\E\Big[ \frac{1}{ X_i}\Big]\le (2(C/\eps)^{20})^\ell.
\end{gather*}
Hence,
$${}\E_{\{x_i,i \in [\ell]\}}\Big[ \E_a \Big[\Pi_{i=1}^\ell \mathbf{1}_{I_{\delta}(b_0)}( a^Tx_{i})\Big] \Big]\le (2(C/\eps)^{20}\delta)^\ell.$$	
\end{proof}	

We remark that the bound in Lemma \ref{lem:upper_bnd_simple} is tight up to the constant $C$ and the constant 20 in the exponent of $\eps$. This follows from the case where $\tilde{f}$ is the uniform distribution on all points $x$ on the unit sphere which are distance at most $\eps$ from a fixed point $x_0$.
	
	We now prove Lemma \ref{lem:sphere}. Lemma \ref{lem:sphere_simpe}, which is used in the proof of Theorem \ref{thm:main}, is a corollary of Lemma \ref{lem:sphere} and follows by appropriately rescaling the constants in the statement of Lemma \ref{lem:sphere}.
	
	\sphere*
	\begin{proof}
		We will first upper bound the probability that $\norm{v_i}\le (\eps/C)^{20}$ for any $i\le d/2$ if $x_i$ is drawn uniformly at random from the unit sphere. As the Gaussian distribution is spherically symmetric, we can assume without loss of generality that $\{x_j: j<i\}$ span the first $i$ basis directions. Sampling $x_i$ uniformly at random from the unit sphere is equivalent to first sampling a vector $\tilde{x_i}$ whose each coordinate is sampled independently from a $N(0,1/{d})$ distribution, and then setting $x_i=\tilde{x_i}/\norm{\tilde{x}_i}$. Using this formulation, let the $j$th coordinate of $\tilde{x}_i$ be $z_j$, where $z_j \sim N(0,1/{d})$. We will first show that with probability $1-\eps^{d}$, $\norm{\tilde{x}_i}\le 2\sqrt{1/\eps}$. This follows from the following tail bound for $\chi^2$ random variables. Note that $r^2=d\sum_{j=1}^{d}z_j^2$ is the sum of $d$ standard Gaussian random variable and hence is a $\chi^2$ random variables with $d$ degrees of freedom. We use the following concentration inequality for a $\chi^2$ random variable $r$ with $d$ degrees of freedom (Lemma 1 in \cite{laurent2000adaptive}), 
		$$ \Pr[r^2-d\ge 2\sqrt{dt}+2t] \le e^{-t},\; \forall \; t>0.$$

		Choosing $t=d\log(1/\eps)$, $$\Pr[r^2-d\ge 2d\sqrt{\log(1/\eps)}+2d\log(1/\eps)] \le \eps^{d}.$$ Using the fact that $4\max(\log(1/\eps),\sqrt{\log(1/\eps)})\le 3/\eps$ for $0\le\eps\le 1$, $\Pr[r^2-d\ge 3d/\eps] \le \eps^{d}.$ Hence $\norm{\tilde{x}_i}\le \sqrt{1+3/\eps}\le \sqrt{4/\eps}$ with failure probability $\eps^{d}$. 
		
		We now claim that ${\Pr[\sum_{j=i+1}^{d}z_i^2 \le \eps^2/64 ]} \le \eps^{d/4}$. Note that if $\sum_{j=i+1}^{d}z_i^2 \le \eps^2/64$, then by Markov's inequality $z_j^2 \le \eps^2/(32(d-i))\le \eps^2/(16d)$ for at least half of the random variables $\{z_j:j\in [i+1,d]\}$. As $z_j$ is a $N(0,1/{d})$ random variable, hence $\Pr[|z_j| \le \eps/(4\sqrt{d})] \le \eps/4$. Hence the probability that $z_j^2 \le \eps^2/(4d)$ for at least half of the random variables $\{z_j:j\in [i+1,d]\}$ is at most the probability that at least half of a set of $(d-i)$ independent coins land heads when all of them are flipped, given that each of them has probability $\eps/4$ of landing head. Let $X_j$ be the indicator random variable denoting the event $|z_j|\le \eps/(4\sqrt{d})$, note that $\Pr[X_j=1]\le \eps/4$. Let $X=\sum_{j=i+1}^{d}X_j$. Then by standard Chernoff bounds,
\begin{align*}
\Pr[X\ge (d-i)/2]\le \exp\{-(d-i)D(1/2\parallel \eps/4)\}
\end{align*}		
where $D(1/2\parallel \eps/4)$ is the relative entropy of $1/2$ with respect to $\eps/4$. Note that $D(1/2\parallel \eps/4)\ge \frac{-\ln (\eps)}{2}$ and $(d-i)\ge d/2$. 	Therefore, 
\begin{align*}
\Pr[X\ge (d-i)/2]\le \eps^{d/4}.
\end{align*}
Hence for any $i \le d/2$, $\sum_{j=i+1}^{d}z_j^2 \ge \eps^2/64$ with failure probability $\eps^{d/4}$. Note that $\norm{v_i}^2=\sum_{j=i+1}^{d}z_j^2/\norm{\td{x}_i}^2$, and we have shown that $\norm{\td{x}_i}^2\le 4/\eps $ with failure probability $\eps^{d}$. Therefore, by a union bound, $\Pr[\norm{v_i}\le \eps^{1.5}/16]\le \eps^{d/4}+\eps^{d}\le 2\eps^{d/4}$. 
For any $C\ge 200$ and $\eps\in (0,1]$, this implies that  $\Pr[\norm{v_i}\le (\eps/C)^{20}]\le (\eps/100)^{3d}$ when $x_i$ is drawn uniformly from the unit sphere. We next show that this implies that $\Pr[\norm{v_i}\le (\eps/C)^{20}]$ is also small when $x_i$ is drawn uniformly from some distribution $f$ such that $\norm{f}$ is small. 

Let $\mathcal{X}$ be the set of all $x_i$ on the unit sphere such that $\norm{v_i}\le (\eps/C)^{20}$. We have shown that for $x_i$ drawn uniformly from the unit sphere, $\Pr[x_i \in \mathcal{X}] \le (\eps/100)^{3d}$. Note that if $\mathcal{X}$ has probability $p$ under the distribution $f$, then $\norm{f}\ge \frac{p(100/\eps)^{3d/2}}{C_d}$. This is because the uniform distribution on $\mathcal{X}$ has norm $\frac{(100/\eps)^{3d/2}}{C_d}$. Therefore if $\norm{f}\le \frac{(100/\eps)^{d}}{C_d}$, then $p \le (\eps/2)^{d/2}$.


	\end{proof}



\section*{Acknowledgements}

We thank Michael Kim and the anonymous reviewers for comments and feedback. 
	
\bibliographystyle{plainnat}
\bibliography{references.bib}	

\end{document}